\def\eqref#1{equation~\ref{#1}}
\def\1{\bm{1}}
\DeclareMathAlphabet{\mathsfit}{\encodingdefault}{\sfdefault}{m}{sl}
\SetMathAlphabet{\mathsfit}{bold}{\encodingdefault}{\sfdefault}{bx}{n}
\DeclareMathOperator*{\argmin}{arg\,min}
 \newtheorem*{assumption*}{Assumption}
 \newtheorem{assumption}{Assumption}
 \newtheorem{theorem}{Theorem}[section]
 \newtheorem{lemma}[theorem]{Lemma}
\definecolor{my_blue}{HTML}{1E76B5}
\definecolor{my_green}{HTML}{2AA02A}
\definecolor{my_orange}{HTML}{FC7E10}
\title{Multi-scale Feature Learning Dynamics: \\
Insights for Double Descent}
\author{Mohammad Pezeshki \thanks{Corresponding authors: \{pezeshki, guillaume.lajoie\}@mila.quebec .} \\
Mila, Université de Montréal\\
\And
Amartya Mitra \\
University of California, Riverside \enspace \enspace \enspace \enspace \enspace \enspace \enspace \enspace \enspace\\
\AND
Yoshua Bengio\\
Mila, Université de Montréal\\
\And
Guillaume Lajoie$^*$\\
Mila, Université de Montréal\\
}
\begin{document}

\maketitle

\begin{abstract}
A key challenge in building theoretical foundations for deep learning is the complex optimization dynamics of neural networks, resulting from the high-dimensional interactions between the large number of network parameters. Such non-trivial dynamics lead to intriguing behaviors such as the phenomenon  of ``double descent'' of the generalization error. The more commonly studied aspect of this phenomenon corresponds to \textit{model-wise} double descent where the test error exhibits a second descent with increasing model complexity, beyond the classical U-shaped error curve. In this work, we investigate the origins of the less studied \textit{epoch-wise} double descent in which the test error undergoes two non-monotonous transitions, or descents as the training time increases. By leveraging tools from statistical physics, we study a linear teacher-student setup exhibiting epoch-wise double descent similar to that in deep neural networks. In this setting, we derive closed-form analytical expressions for the evolution of generalization error over training. We find that double descent can be attributed to distinct features being learned at different scales: as fast-learning features overfit, slower-learning features start to fit, resulting in a second descent in test error. We validate our findings through numerical experiments where our theory accurately predicts empirical findings and remains consistent with observations in deep neural networks.
\end{abstract}

\vspace{-0.5cm}
\section{Introduction}
\vspace{-0.1cm}
\label{sec:introduction}
Classical wisdom in statistical learning theory predicts a trade-off between the generalization ability of a machine learning model and its complexity, with highly complex models less likely to generalize well~\citep{friedman2001elements}. If the number of parameters measures complexity, deep learning models sometimes go against this prediction~\citep{zhang2016understanding}: deep neural networks trained by stochastic gradient descent exhibit a so-called {\it double descent} behavior~\citep{belkin2019reconciling} with increasing model parameters. Specifically, with increasing complexity, the generalization error first obeys the classical U-shaped curve consistent with statistical learning theory. However, a second regime emerges as the number of parameters is further increased past a transition threshold where generalization error drops again, hence the ``double descent'' or more accurately \textit{model-wise double descent}~\citep{nakkiran2019deep}.

\cite{nakkiran2019deep} showed that the phenomenon of double descent is not limited to varying model size and is also observed as a function of training time or epochs. In this case as well, the so-called \textit{epoch-wise double descent} is in apparent contradiction with the classical understanding of over-fitting~\citep{vapnik1998statistical}, where one expects that longer training of a sufficiently large model beyond a certain threshold should result in over-fitting. This has important implications for practitioners and raises questions about one of the most widely used regularization method in deep learning~\citep{goodfellow2016deep}: early stopping. Indeed, while one might expect early stopping to prevent over-fitting, it might in fact prevent models from being trained at their fullest potential.

Since the 1990s, there has been much interest in  understanding the origins of non-trivial generalization behaviors of neural networks  \citep{opper1995statistical,opper1996statistical}.
The authors of \cite{krogh1992simple} were among the first to provide theoretical explanations for (model-wise) double descent in linear models. Summarily, at intermediate levels of complexity, where the model size is equal to the number of training examples, the model is very sensitive to noise in training data and hence, generalizes poorly. This sensitivity to noise reduces if the model complexity is either decreased or increased. More recently, the double descent phenomena has been also studied for more complex models such as two-layer neural networks and random feature models \citep{ba2019generalization,mei2019generalisation,pmlr-v119-d-ascoli20a,gerace2020generalisation}.

The majority of previous work in this direction focus on understanding the \textit{asymptotic} behavior of  model performance, i.e., where training time $t\to\infty$. In recent years, there has been an interest in studying the \textit{non-asymptotic} (finite training time) performance \citep[e.g.][]{saxe2013exact, advani2017high, nakkiran2019sgd, pezeshki2020gradient, stephenson2021and}. Among the limited work studying the particular epoch-wise double descent, \cite{nakkiran2019deep} introduces the notion of \textit{effective model complexity} and hypothesizes that it increases with training time and hence unifies both model-wise and epoch-wise double descent. Through a combination of theory and empirical results,~\cite{heckel2020early} find that the dynamics of evolution of single and two layer networks under gradient descent, can be perceived to be the superposition of two bias/variance curves with different minima times, thus leading to non-monotonic test error curves.

In this work, we build on~\cite{bos1993generalization,bos1998statistical,advani2017high,mei2019generalisation} which analyze \textit{model-wise} double descent through the lens of linear models, to probe the origins of \textit{epoch-wise} double descent. Particularly,
\begin{itemize}
    \item We introduce a linear teacher-student model which, despite its simplicity, exhibits some of intriguing properties of generalization dynamics in deep neural networks. (Section~\ref{sec:teacher_student})

    \item In the limit of high dimensions, we leverage the replica method developed in statistical physics to derive closed-form expressions for the generalization dynamics of our teacher-student setup, as a function of training time and the amount of regularization. (Section~\ref{sec:main_result})

    \item Consistent with recent findings, we provide an explanation for the existence of epoch-wise double descent, suggesting that several intriguing properties of neural networks can be attributed to different features being learned at different scales. (Figure \ref{fig:main})

    \item We perform simulation experiments to validate our analytical predictions. We also conduct experiments with deep networks, showing that our teacher-student setup exhibits generalization behavior which is qualitatively similar to that of deep networks. (Figure \ref{fig:epoch_wise_dd})
\end{itemize}

\begin{figure}[t]
\center{\includegraphics[width=0.96\textwidth]{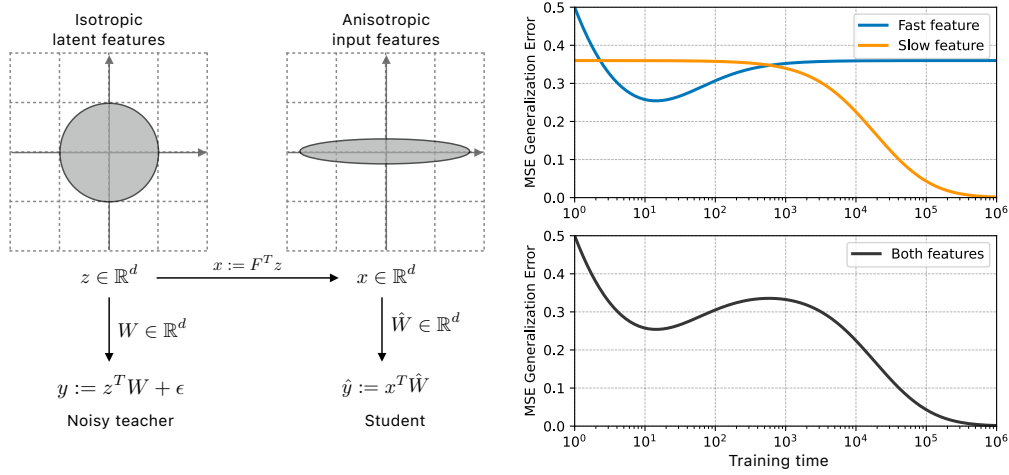}}
\caption{\textbf{Left}: The teacher is the data generating process that operates on isotropic Gaussian inputs $\bm{z}$. The student is trained on a dataset generated by the teacher, $\mathcal{D}=\{\bm{x}_i, y_i\}_{i=1}^n$ where $\bm{x}:=F^T\bm{z}$ follow an anisotropic Gaussian distribution such that the directions with larger/smaller variance are learned faster/slower. The condition number of $F$ determines how much faster some features are learned than the others. One can think of $\bm{z}$ as the latent factors of variation on which the teacher operates, while $\bm{x}$ can be thought as the pixels that the student learns from.
\textbf{Right}: The generalization error as the training time proceeds. (top): The case where only the fast-learning feature \underline{or} slow-learning feature are trained. (bottom): The case with both features. Features that are learned on a faster time-scale are responsible for the classical U-shaped generalization curve, while the second descent can be attributed to the features that are learned at a slower rate.}
\label{fig:main}
\end{figure}

\section{Analytical Results}\label{sec:theory_res}
Stochastic Gradient Descent (SGD) --- the de facto optimization algorithm for neural networks --- exhibits complex dynamics arising from a large number of parameters~\citep{kunin2020neural}. While an exact analysis of such dynamics is intractable due to the large number of \textit{microscopic} parameters, it is though possible to capture some aspects of this high-dimensional dynamics in terms of certain low-dimensional comprehensible \textit{macroscopic} entities. This was demonstrated in a series of seminal papers by Gardner~\citep{gardner1988space, gardner1988optimal, gardner1989three}, where the \textit{replica method} of statistical physics was adopted to derive expressions describing the generalization behavior of large linear models trained using SGD. In this paper, we employ Gardner's analysis to build upon an established line of work studying linear and generalized linear models~\citep{seung1992statistical, kabashima2009typical, krzakala2012statistical}. While most of previous work study the asymptotic ($t \rightarrow \infty$) generalization behavior, we adapt these methods to study transient learning dynamics of generalization for finite training time. In the following, by introducing a particular teacher-student model, we utilize the replica method to study its generalization performance as a function of training time and regularization strength, to capture interesting characteristics of modern neural networks.

\paragraph{Notation} Scalar variables are denoted in lower case ($y$), while vectorial entities are represented in boldface ($\bm{x}$). Lastly, matrices are shown capitalized (\textsc{F}).

\subsection{A Teacher-Student Setup}\label{sec:teacher_student}
\paragraph{Teacher:} We study a supervised linear regression problem in which the training labels $y$, are generated by a noisy linear model (Figure \ref{fig:main}),
\begin{align}\label{eq:teacher}
    y:= y^* + \epsilon,  \qquad  y^* := \bm{z}^T \bm{W}, \qquad z_i\sim\mathcal{N}(0,\frac{1}{\sqrt{d}}),
\end{align}
where $\bm{z} \in \mathbb{R}^d$ is the teacher's input and $y^*, y \in \mathbb{R}$ are the teacher's noiseless and noisy outputs, respectively. $\bm{W} \in \mathbb{R}^d$ represents the (fixed) weights of the teacher and $\epsilon \in \mathbb{R}$ is the label noise. Here, both $W_i$ and $\epsilon$ are drawn i.i.d. from Gaussian distributions with zero means and variances of $1$ and $\sigma_{\epsilon}^2$, respectively.
\vspace{-0.3cm}
\paragraph{Student:} 
A student model is correspondingly chosen to be a similar shallow network with trainable weights $\hat{\bm{W}}\in \mathbb{R}^d$. The student model is trained on $n$ training pairs $\{\left(\bm{x}^{\mu}, y^{\mu}\right)\}_{\mu =1}^n$, with the labels $y^{\mu}$ being generated by the above teacher network and where student's inputs $\bm{x}^{\mu}$ correspond to teacher inputs $\bm{z}^\mu$ multiplied a predefined and fixed \textbf{modulation matrix} $F$ that regulates input features' strengths:
\begin{align}\label{eq:student}
    \hat{y}:= \bm{x}^T \hat{\bm{W}}, \qquad s.t. \qquad \bm{x} := \textsc{F}^T \bm{z}.
\end{align}
 One can think of $\bm{z}$ as the latent factors of variation on which the teacher operates, while $\bm{x}$ can be thought as the pixels that the student learns from.
\vspace{-0.3cm}
\paragraph{Learning paradigm:} To train our student network, we use stochastic gradient descent (SGD) on the regularized mean-squared loss, evaluated on the $n$ training examples as,
\begin{align}\label{eq:loss}
    \mathcal{L}_T := \frac{1}{2n} \sum_{\mu=1}^{n} (y^{\mu} - \hat{y}^{\mu})^2 + \frac{\lambda}{2} ||\hat{\bm{W}}||^2_2
\end{align}
where $\lambda \in [0, \infty)$ is the regularization coefficient. Optimizing Eq.~\ref{eq:loss} with stochastic gradient descent (SGD) yields the typical update rule,
\begin{align}\label{eq:sgd}
    \hat{\bm{W}}_{t} \leftarrow \hat{\bm{W}}_{t-1} - \eta \nabla_{\hat{\bm{W}}} \mathcal{L}_T + \xi,
\end{align}
in which $t$ denotes the training step and $\eta$ is the learning rate. 
Additionally, $\xi$ models the stochasticity of the optimization algorithm.

\paragraph{Macroscopic variables:}
The quantity of interest in this work is the expected generalization error of the student determined by averaging the student's error over all possible input-target pairs of a noiseless teacher, as
\begin{align}\label{eq:gen_error}
    \mathcal{L}_G := \frac{1}{2}\mathbb{E}_{z} \big [ (y^* - \hat{y})^2 \big ].
\end{align}
As shown in \cite{bos1993generalization}, if $n, d \rightarrow \infty$ with a constant ratio $\frac{n}{d} < \infty$, Eq. \ref{eq:gen_error} can be written as a function of two macroscopic scalar variables $R, Q \in \mathbb{R}$,
\begin{gather}
    \label{eq:L_G}
    \mathcal{L}_G = \frac{1}{2}(1 + Q - 2R),
\end{gather}
where,
\begin{gather}\label{eq:R_and_Q}
    R := \frac{1}{d} \bm{W}^T \textsc{F} \hat{\bm{W}} , \qquad Q := \frac{1}{d} \hat{\bm{W}}^T \textsc{F}^T \textsc{F} \hat{\bm{W}},
\end{gather}
See App. \ref{app:proof_L_G} for the proof.

\textit{Remark:} Both $R$ and $Q$ have clear interpretations; $R$ is the dot-product between the teacher's weights $\bm{W}$\ and the student's \textit{modulated} weights $\textsc{F}\hat{\bm{W}}$, hence can be interpreted as the \textbf{alignment between the teacher and the student}. Similarly, $Q$ can be interpreted as the \textbf{student's modulated norm}. The negative sign of $R$ in Eq.~\ref{eq:L_G} suggests that the larger $R$ is, the smaller the generalization error gets. At the same time, $Q$ appears with a positive sign suggesting the students with smaller (modulated) norm generalize better.

Note that both $R$ and $Q$ are functions of $\hat{\bm{W}}$ which itself is a function of training iteration $t$ and the regularization coefficient $\lambda$. Therefore, hereafter, we denote the above quantities as $\mathcal{L}_G (t, \lambda)$, $R(t, \lambda)$, and $Q(t, \lambda)$.

\subsection{Main Results}
\label{sec:main_result}
In this Section, we present our main analytical results, with Section \ref{sec:sketch} containing a sketch of our derivations. For brevity, here, we only present the results for $\sigma^2_{\epsilon} = \lambda = 0$. See App. \ref{app:theory} for the general case and the detailed proofs.

\paragraph{General matrix $\textsc{F}$.} Let $\textsc{Z} := [\bm{z}^{\mu}]_{\mu=1}^n \in \mathbb{R}^{n \times d}$ and $\textsc{X} := [\bm{x}^{\mu}]_{\mu=1}^n \in \mathbb{R}^{n \times d}$ denote the input matrices for the teacher and student such that $\textsc{X} := \textsc{Z} \textsc{F}$. For a general modulation matrix $\textsc{F}$, the input covariance matrix has the following singular value decomposition (SVD),
\begin{gather}
    \textsc{X}^T \textsc{X} = \textsc{F}^T \textsc{Z}^T \textsc{Z} \textsc{F} = \textsc{V} \Lambda \textsc{V}^T,
\end{gather}
with $\Lambda$ containing the singular values of the student's input covariance matrix. Solving the dynamics of exact gradient descent as in Eq. \ref{eq:sgd}, we arrive at the following exact analytical expressions for $R(t)$ and $Q(t)$,
\begin{align}
    \label{eq:R_}
    R(t) &= \frac{1}{d} \textbf{Tr} \left(\textsc{D}\right), \quad \quad \ \text{where}, \quad \textsc{D}:= \textsc{I} - \left[\textsc{I} - \eta \Lambda\right]^t,\\
    \label{eq:Q_}
    Q(t) &= \frac{1}{d} \textbf{Tr} \left(\textsc{A}^T \textsc{A} \right), \quad \text{where}, \quad \textsc{A}:= \textsc{F} \textsc{V} \textsc{D} \textsc{V}^T \textsc{F}^{-1},
\end{align}
in which $\textbf{Tr}(.)$ is the trace operator. See App. \ref{app:proof_general} the proof.

\textit{Remark:} The solution in Eqs. \ref{eq:R_} and \ref{eq:Q_} are exact, however, they require the empirical computation of the eigenvalues $\Lambda$. Below, we treat a special case of the dynamics that allow us to derive approximate solutions that do not explicitly depend on $\Lambda$.

\paragraph{Special case: Bipartite matrix \textsc{F}.} We now study a case where the modulation matrix $\textsc{F}$ has a specific structure described in Assumption \ref{assump:two_blocks}.

\begin{assumption}\label{assump:two_blocks}
  The modulation matrix, $\textsc{F}$, under a SVD, $\textsc{F} := \textsc{U}\Sigma\textsc{V}^T$ has two sets of singular values such that the first $p$ singular values are equal to $\sigma_1$ and the remaining $d-p$ singular values are equal to $\sigma_2$. We let the condition number of $\textsc{F}$ to be denoted by $\kappa := \frac{\sigma_1}{\sigma_2} \geq 1$.
\end{assumption}

By employing the replica method of statistical physics~\citep{gardner1988space,gardner1988optimal}, we now derive approximate expressions for $R(t)$ and $Q(t)$. To begin with, we first define the following auxiliary variables,
\begin{gather}
    \label{eq:auxilary}
    \alpha_1 := \frac{n}{p},\ \alpha_2 := \frac{n}{d-p}, \qquad
    \tilde{\lambda}_1 := \frac{d}{p} \underbrace{\frac{1}{\eta \sigma_1^2 t}}_{\text{time scaled by } \sigma_1^2}, \ 
    \tilde{\lambda}_2 := \frac{d}{d-p} \underbrace{\frac{1}{\eta \sigma_2^2 t}}_{\text{time scaled by } \sigma_2^2},
\end{gather}
\vspace{-0.5cm}
and also let,
\begin{gather}
    \label{eq:a1a2}
    a_i = 1 + \frac{2 \tilde{\lambda}_i}{(1-\alpha_i -\tilde{\lambda}_i) + \sqrt{(1-\alpha_i -\tilde{\lambda}_i)^2 + 4\tilde{\lambda}_i}}, \qquad \text{for} \qquad i \in \{1, 2\}.
\end{gather}

The closed-from scalar expression for $R(t)$ is then given by,
\begin{gather}
    \label{eq:R}
    R(t) = R_1 + R_2, \quad \text{where}, \quad R_1 := \frac{n}{a_1d}, \quad \text{and}, \quad R_2 := \frac{n}{a_2d}.
\end{gather}

For $Q(t)$, we accordingly define two more auxiliary variables,
\begin{gather}
    \label{eq:b1b2}
    b_i = \frac{\alpha_i}{a_i ^ 2 - \alpha_i}, \quad c_i = 1 - 2 R_i - \frac{n}{d}\frac{2 - a_i}{a_i} \qquad \text{for} \qquad i \in \{1, 2\},
\end{gather}
with which the closed-from scalar expression for $Q(t)$ reads,
\begin{gather}
    \label{eq:Q}
    Q(t) = Q_1 + Q_2, \quad \text{where}, \quad Q_1 := \frac{b_1 b_2 c_2 + b_1 c_1}{1 - b_1 b_2}, \quad \text{and}, \quad Q_2 := \frac{b_1 b_2 c_1 + b_2 c_2}{1 - b_1 b_2}.
\end{gather}

By plugging Eqs.~\ref{eq:R} and \ref{eq:Q} into Eq.~\ref{eq:L_G}, one obtains a closed-form expression for $\mathcal{L}_G (t)$ as a function of the training time. See App. \ref{app:proof_special} for the proof.

\begin{wrapfigure}[7]{hR!}{0.4\textwidth}
    \vspace{-1.55em}
    \includegraphics[width=1.0\linewidth]{R_decomposition.pdf}%
\end{wrapfigure}
\textit{Remark:} Eq. \ref{eq:auxilary} indicates that the singular values of $F$, are directly multiplied by $t$. That implies that the learning speed of each feature is scaled by the magnitude of its corresponding singular value. As an illustration, the figure on the right shows the evolution of $R_1$, $R_2$, and $R=R_1+R_2$ for a case where $p = d/2$, $\sigma_1 = 1$, and $\sigma_1 = 0.01$, implying a condition number of $\kappa=100$.

\subsection{Sketch of derivations}\label{sec:sketch}

In this Section, we sketch the key steps in the derivation of our main results. For the sake of simplicity, here we only treat the case where $\sigma_{\epsilon} = \lambda = 0$. The general case with detailed proofs are presented in App \ref{app:theory}.

\paragraph{Exact dynamics of SGD.}
Recall the gradient descent update rule in Eq.~\ref{eq:sgd}. For the linear model defined in Eqs.~\ref{eq:teacher}-\ref{eq:student}, learning is governed by the following discrete-time dynamics,
\begin{align}
    \hat{\bm{W}}_{t} &= \hat{\bm{W}}_{t-1} - \eta \nabla_{\hat{\bm{W}}_{t-1}} \mathcal{L}_T,\\
                &= \hat{\bm{W}}_{t-1} - \eta \big [ -X^T (y - X\hat{\bm{W}}_{t-1})  \big ].
\end{align}

With the assumption that $\hat{\bm{W}}_{t=0} = \bm{0}$, the dynamics admit the following exact closed-form solution,
\begin{align}
    \label{eq:sol_sgd}
    \hat{\bm{W}}_{t} = \Big ( I - \big [ I - \eta X^TX \big ] ^t \Big ) (X^TX)^{-1} X^Ty := \tilde{W}(t).
\end{align}

With a SVD on $X^TX$, Eqs.~\ref{eq:R_}-\ref{eq:Q_} can then be obtained by substituting $\hat{\bm{W}}_t$ in Eq.~\ref{eq:R_and_Q}. As a remark, note that one can recover the results of \cite{advani2017high} by setting $F = I$. In that case, the eigenvalues of $X^TX$ follow a Marchenko–Pastur distribution~\citep{marchenko1967distribution}.

\paragraph{Induced probability density of SGD.}
It is well-known~\citep{kuhn1993statistical, solla1995bayesian} that probability distribution of weight configurations for network weights $\hat{\bm{W}}$ trained via SGD on a loss $\mathcal{L}(\hat{\bm{W}})$, tend to the Gibbs distribution such that,
\begin{gather}
    \label{eq:gibbs}
    P(\hat{\bm{W}}) = \frac{1}{Z_{\beta}} e^{-\beta \mathcal{L}(\hat{\bm{W}})}, 
\end{gather}
in which $Z_{\beta}$ is the partition function $\left(\int\mathrm{d}\hat{\bm{W}}\exp({-\beta \mathcal{L}(\hat{\bm{W}})})\right)$ and $\beta$ is called the \textit{inverse temperature} and is inversely proportional the stochastic noise of SGD, $\xi$, defined in Eq. \ref{eq:sgd}. Intuitively, for small $\beta$, the distribution of $P(\hat{\bm{W}})$ is almost uniform, while as $\beta \rightarrow \infty$, $P(\hat{\bm{W}})$ becomes more concentrated around the minimum of the loss $\mathcal{L}(\hat{\bm{W}})$.

It is important to highlight that Eq.~\ref{eq:gibbs} describes the \textit{equilibrium} distribution of the student network's weights, i.e., at the end of training ($t\to\infty$). However, we are interested in studying the trajectory of student's weights \textit{during} the course of training, i.e., for finite $t$. To that end, we derive the \textbf{time-dependent} probability density over $\hat{\bm{W}}$,
\begin{align}
    \label{eq:gibbs_t}
    P(\hat{\bm{W}}, t) = \frac{1}{Z_{\beta}, t} &e^{-\beta \tilde{\mathcal{L}}(\hat{\bm{W}}, t)}, \quad \text{where,}\\
    \tilde{\mathcal{L}}_T(\hat{\bm{W}}, t) :&= \frac{1}{2n} \sum (\hat{y}^{\mu} - x^{{\mu}^T}\tilde{W}(t))^2 + \frac{\lambda}{2} ||\hat{\bm{W}}||_2^2, \qquad \text{($\tilde{W}(t)$ defined in Eq. \ref{eq:sol_sgd})} \\
    \label{eq:loss_approx}
    &\approx \mathcal{L}_T(\hat{\bm{W}}) + \frac{1}{2}\big( \lambda + \frac{1}{\eta t} \big) ||\hat{\bm{W}}||_2^2. \hspace{3.6cm} \text{(Lemma \ref{lemma:loss_approx})}
\end{align}

\begin{wrapfigure}[6]{hR!}{0.4\textwidth}
    \vspace{-1.55em}
    \includegraphics[width=1.0\linewidth]{loss_approx.png}%
\end{wrapfigure}
\textit{Remark:} $\tilde{\mathcal{L}}_T(\hat{\bm{W}}, t)$ is a modified loss such that its minimum (equilibrium distribution) coincides with the $t^{\text{th}}$ iterate of gradient descent on $\mathcal{L}(\hat{\bm{W}})$. The schematic diagram on the right illustrates this equivalence, such that,
$\argmin_{\hat{\bm{W}}} \tilde{\mathcal{L}}_T(\hat{\bm{W}}, t) = \hat{\bm{W}}_t,$
where $\hat{\bm{W}}_t$ is the defined in Eq. \ref{eq:sgd}.

\paragraph{The typical generalization error.}
To determine the \textit{typical} generalization performance at time $t$, one proceeds by first computing the free-energy of the system as,
\begin{align}
    \label{eq:free_energy}
    f := -\frac{1}{\beta d} \mathbb{E}_{W, \bm{z}} \big[ \ln Z_{\beta,t} \big].
\end{align}

Free-energy is a self-averaging property where its \textit{typical/most probable} value coincides with its \textit{average} over proper probability distributions \citep{engel2001statistical}. Therefore, to determine the typical values of $R$ and $Q$, we extremize the free-energy w.r.t. those variables.

Due to the logarithm inside the expectation, analytical computation of Eq.~\ref{eq:free_energy} is intractable. However, the replica method~\citep{mezard1987spin} allows us to tackle this through the following identity,
\begin{align}
    \mathbb{E}_{W, \bm{z}} [\ln Z_{\beta,t}] = \lim_{r \rightarrow 0} \frac{\mathbb{E}_{W, \bm{z}} [Z_{\beta,t}^r] -1}{r}.
\end{align}
Computation of the free-energy via replica method and its subsequent extremization w.r.t $R$ and $Q$, we arrive at Eqs.~\ref{eq:R} and~\ref{eq:Q}. See App. \ref{app:proof_special} for more details.

To summarize, using the replica method, we are able to cast the high-dimensional dynamics of SGD into simple scalar equations governing $R$ and $Q$ and, consequently, the generalization error $\mathcal{L}_G$. While our analysis is limited to the specific teacher and student setup, this simple model already exhibits dynamics qualitatively similar to those observed in more complex networks, as we now illustrate.

\section{Experimental Results}
In this Section, we conduct numerical simulations to validate our analytical results and provide clear insights on the macroscopic dynamics of generalization. We also conduct experiments on real-world neural networks showing a close qualitative match between the generalization behavior of neural networks and our teacher-student setup.

\textbf{For real-world experiments}, we train a \textbf{ResNet18} \citep{he2016deep} with large layer widths $[64, 2 \times 64, 4 \times 64, 8 \times 64]$. We follow the training setup of \cite{nakkiran2019deep}; Label noise with a probability $0.15$ randomly assign an incorrect label to training examples. Noise is sampled only once before the training starts. We train using Adam \citep{kingma2014adam} with learning rate of $1e-4$ for 1K epochs. Real-world experiments are averaged over $50$ random seeds. 
To ensure reproducibility, we include the complete source code in a  \href{https://github.com/mohammadpz/Epoch_wise_Double_Descent}{\texttt{GitHub repository}} as well as a \href{https://colab.research.google.com/drive/10UHRBnIa2V8uwBWXd5W_-ZhKKSh2OPy7?usp=sharing}{\texttt{Colab notebook}}.

\subsection{Match between theory and real-world experiments}

We conduct an experiment on the classification task of CIFAR-10 \citep{krizhevsky2009learning} with varying amount of weight decay regularization strength $\lambda$. 
We monitor the generalization error (0-1 test error) during the course of training and visualize a heat-map of the generalization error for different $\lambda$'s in Figure~\ref{fig:epoch_wise_dd} (a).

We also conduct a similar experiment with the teacher-student setup presented in Section~\ref{sec:teacher_student}. We visualize a heat-map of the generalization error which is the mean squared error (MSE) over test distribution in Figure \ref{fig:epoch_wise_dd} (c). Particularly, we plot Eqs.~\ref{eq:R} and \ref{eq:Q} with a $\kappa = 100$.

\begin{figure}[t]
\center{\includegraphics[width=1.0\textwidth]
        {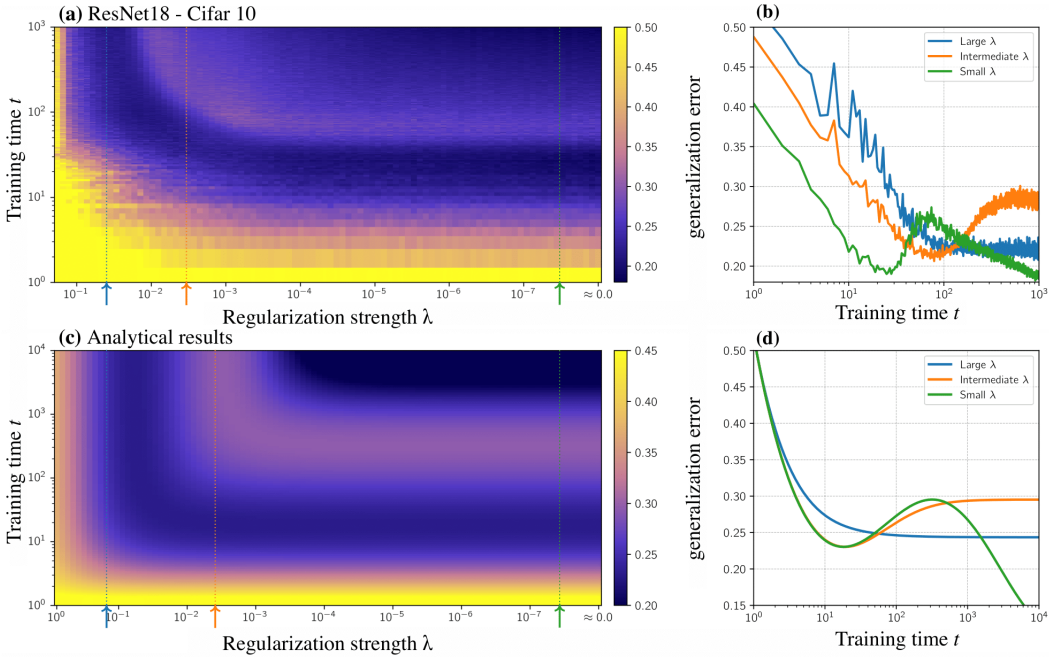}}
\caption{\textbf{A qualitative comparison between a ResNet-18 and our analytical results.} \textbf{(a)}: Heat-map of \underline{empirical} generalization error (0-1 classification error) for the ResNet-18 trained on CIFAR-10 with $15 \%$ label noise. X-axis denotes the inverse of weight-decay regularization strength and Y-axis represents the training time. \textbf{(c)}: Heat-map of the \underline{analytical} generalization error (mean squared error) for the linear teacher-student setup with $\kappa=100$, the condition number of the modulation matrix. \textbf{(b, d)}: Three slices of the heat-maps for large, intermediate, and small amounts of regularization. \textbf{Analysis}: As predicted by Eqs. \ref{eq:R} and \ref{eq:Q}, $\kappa=100$ implies that a subset of features are learned $100$ times faster that the rest. Intuitively, large amounts of regularization (\textcolor{my_blue}{$\uparrow$}) allow for the fast-learning features to be learned but overfitting. Intermediate levels of regularization (\textcolor{my_orange}{$\uparrow$}) result in a classical U-shaped generalization curve but prevent learning of slow features. Small amounts of regularization (\textcolor{my_green}{$\uparrow$}) allow for both fast and slow features to be learned, leading to a double descent curve.} 
\label{fig:epoch_wise_dd}
\end{figure}

It is observed that in both experiments, a model with intermediate levels of regularization displays a typical overfitting behavior where the generalization error decreases first and then overfits. This is consistent with Eq. \ref{eq:time+lambda} of the appendix, larger amounts of regularization act as early stopping, as $\lambda$ and the inverse of $t$ are summed. From another perspective, learning of slow features requires large weights which is something that is penalized by the weight-decay. On the other hand, a model with smaller amount of regularization exhibits the double descent generalization curve.

We also validate our derived analytical expressions by running numerical simulations which are presented in Figure \ref{fig:match_theory}.

\subsection{The Phase diagram}

To further investigate the transition between the two phases of \textit{classical single descent} and \textit{double descent}, we explore the phase diagram. Recall that with Eq. \ref{eq:L_G}, one can fully characterize the evolution of the generalization dynamics in terms of two scalar variables instead of the $d$-dimensional parameter space. $R$ and $Q$ presented in Eq. \ref{eq:R_and_Q} are macroscopic variables where $R$ represents \textbf{the alignment between the teacher and the student} and $Q$ is the 
\textbf{student's (modulated) norm}. Hence, a better generalization performance is achieved with larger $R$ and smaller $Q$.

$R$ and $Q$ are not free parameters and both depend on the training dynamics through Eqs. \ref{eq:R} and \ref{eq:Q}. Nevertheless, it is instructive to visualize the generalization error for all pairs of $(R, Q)$. In Figure \ref{fig:phase_diagram_dd}, we visualize the $RQ$-plane for $(R, Q) \in [0.0, 1.0] \times Q \in [0.0, 1.2]$. At the time of initialization, $(R, Q) = (0, 0)$ as the models are initialized at the origin. As training time proceeds, values of $R$ and $Q$ follow the depicted trajectories. In Figure \ref{fig:phase_diagram_dd}, different trajectories correspond to different values of $\kappa$, the condition number of the modulation matrix $F$ in Eq. \ref{eq:student}. It is important to note that \textit{the closer a trajectory is to the lower-right, the better the generalization error gets.}

The yellow curve corresponds to the case with large $\kappa = 1e5$, meaning that a subset of features are extremely slower than the others that practically do not get learned. In that case, generalization error exhibits traditional over-fitting due to over-training. On the phase diagram, the yellow trajectory starts at $(0, 0)$ and moves towards Point $A$ which has the lowest generalization error of this curve. Then as the training continues, $Q$ increases and as $t \rightarrow \infty$ the trajectory lands at Point $B$ which has the worse generalization error (highly-overfitted). Other curves follow the case of $\kappa=1e5$ up to the vicinity of Point B, but then the trajectories slowly incline towards another fixed point, Point $C$ signalling a second descent in the generalization error.

The phase diagram along with the corresponding generalization curves in Figure \ref{fig:epoch_wise_dd} illustrate that features that are learned on a faster time-scale are responsible for the initial conventional U-shaped generalization curve, while the second descent can be attributed to the features that are learned at a slower time-scale.

\begin{figure}[th]
\center{\includegraphics[width=1.0\textwidth]
        {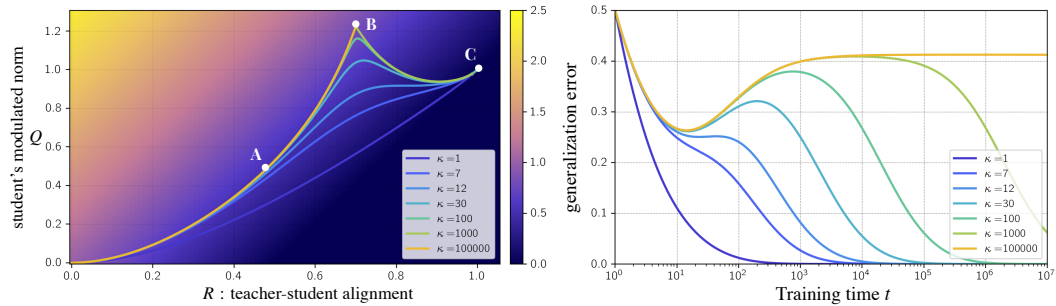}}
\caption{
\textbf{Left}: Phase diagram of the generalization error as a function of $R(t)$ and $Q(t)$ (Eqs. \ref{eq:R} and \ref{eq:Q}). The generalization error for all pairs of $(R, Q) \in [0.0, 1.0] \times [0.0, 1.2]$ is contour-plotted in the background, with the best generalization performance being attained on the lower right part of the plot.
The trajectories describe the evolution of $R(t)$ and $Q(t)$ as training proceeds. Each trajectory correspond to a different  $\kappa$, the condition number of the modulation matrix $F$ in Eq. \ref{eq:student}. $\kappa$ describes the ratio of the rates at which two sets of features are learned.
\textbf{Right}: The corresponding generalization curves.
\textbf{Analysis}: The trajectory with $\kappa=1e5$ starts at the origin and advances towards point $A$ (a descent in generalization error). Then by over-training, it converges to point $B$ (an ascent). For the other trajectories with smaller $\kappa$, a first descent occurs up to the point $A$, then an ascent happens, but they no longer converge to point $B$. Instead, by further training, these trajectories converge to point $C$ implying a second descent.}
\label{fig:phase_diagram_dd}
\end{figure}

\section{Related Work and Discussion}
\vspace{-0.6cm}
Although the term \textit{double descent} has been introduced rather recently \citep{belkin2018reconciling}, similar behaviors had already been observed and studied in several decades-old works form a statistical physics perspective \citep{krogh1992generalization, opper1995statistical, opper1996statistical, bos1998statistical}.
More recently, these behaviors have been investigated in the context of modern machine learning, both from an empirical perspective~\citep{nakkiran2019deep,amari2020does,yang2020rethinking} and theoretical perspective~\citep{belkin2018reconciling,geiger2019jamming,advani2017high,mei2019generalisation,gerace2020generalisation,d2020double,ba2019generalization, d2021interplay}.

\cite{hastie2019surprises, advani2020high,belkin2020two} use random matrix theory (RMT) tools to characterize the asymptotic generalization behavior of over-parameterized linear and random feature models. In an influential work,~\cite{mei2019generalisation} extend the same analysis to a random feature model and theoretically derive the model-wise double descent curve for a model with Tikhonov regularization.~\cite{jacot2020implicit} also study double descent in ridge estimators and show an equivalence to kernel ridge regression.~\cite{pennington2019nonlinear} use RMT to study the curvature of single-hidden-layer neural network in an attempt to understand the efficacy of first-order optimization methods in training DNNs. \cite{Liang_2020} take a similar approach to investigate implicit regularization in high dimensional ridgeless regression with nonlinear kernels. 

While most of the related work study the non-monotonicity of the generalization error as a function of the model size or sample size, \cite{nakkiran2019deep} introduced the epoch-wise double descent, where the double descent occurs as the training time increases. There has been limited work on studying of epoch-wise double descent. Very recently, \cite{heckel2020early} and \cite{stephenson2021and} have focused on finding the roots of this phenomenon. 

\cite{heckel2020early} provides \textit{upper bounds} on the risk of single and two layer models in a regression setting where the input data has distinct feature variances. \cite{heckel2020early} demonstrate that a superposition of two or more bias-variance tradeoff curves leads to epoch-wise double descent. The authors also show that different layers of the network are learned at different epochs. For that reason, epoch-wise double descent can be eliminated by appropriate selection of learning rates for individual network weights. Consistent with these findings, our work formalizes this phenomenon in terms of feature learning scales and provides closed-form predictions.

\cite{stephenson2021and} arrives at similar conclusions. Here, the authors take a random matrix theory approach on a data model that exhibits epoch-wise double descent. The data model is constructed so that the noise is explicitly added \textit{only} to the fast-learning features while slow-learning features remain noise-free. Consequently, the fast-learning features are noisy and hence show a U-shaped generalization curve while slow-learning features are noiseless.

Our findings and those of \cite{heckel2020early} and \cite{stephenson2021and} reinforce one another with a common central finding that the epoch-wise double descent results from different features/layers being learned at different time-scales. However, we also highlight that both \cite{heckel2020early} and \cite{stephenson2021and} use tools from random matrix theory to study distinct data models from our teacher-student setup. We study a similar phenomenon by leveraging the replica method from statistical physics to characterize the generalization behavior using a set of informative macroscopic parameters. The key novel contribution from our approach is the derivation of the macroscopic quantities $R$ and $Q$ (see Eq. \ref{eq:R_and_Q}) which track teacher-student alignment, and student norm, respectively. Crucially, we believe these quantities can be used to study other learning dynamics phenomena, and offer the possibility to be monitored during training, allowing a useful dichotomy of a model's key features influencing generalization.

We believe our framework sets the stage for further understanding of generalization dynamics beyond the double descent. A future direction to study is a case in which the first descent is strong enough to bring down the training loss to zeros such that learning slower features is practically impossible \citep{pezeshki2020gradient} or happens after a very large number of epochs \citep{power2021grokking}. \textit{Grokking} is an instance of such behavior reported by \cite{power2021grokking} in which the model abruptly learns to perfectly generalize but long after the training loss has reached very small values.

{\bf Limitations.} 
While our simple teacher-student setup exhibits certain intriguing phenomena of neural networks, its simplicity introduces several limitations. Studying finer details of the dynamics of neural networks requires more precise, non-linear, and multi-layered models, which introduce novel challenges that remain to be studied in future work.

\section*{Acknowledgments and Disclosure of Funding}

The authors are grateful to Samsung Electronics Co., Ldt., CIFAR, and IVADO for their funding and Calcul Québec and Compute Canada for providing us with the computing resources.
We would further like to acknowledge the significance of discussions and supports from Reyhane Askari Hemmat. We also appreciate the invaluable help from Faruk Ahmed, David Yu-Tung Hui, Aristide Baratin, and Mohammad M. Ahmadpanah.

\bibliography{iclr2022_conference}
\bibliographystyle{iclr2022_conference}

\appendix

\section{Further Related Work and Discussion}

If we consider plots where the generalization error on the $y$-axis is plotted against other quantities on the $x$-axis, we find earlier works 
that have identified double descent behavior for quantities such as the number of parameters, the dimensionality of the data, the number of training samples, or the training time on the $x$-axis. In this paper, we studied epoch-wise double descent, \emph{i.e.} we plot the training time $t$, or the number of training epochs, on the $x$-axis. Literature displaying double descent phenomena in generalization behavior w.r.t. other quantities do so in the limit of $t \rightarrow \infty$. 

From a random matrix theory perspective, \cite{le1991eigenvalues, hastie2019surprises, advani2020high}, and \cite{belkin2020two} are among works which have analytically studied the spectral density of the Hessian matrix. According to their analyses, at intermediate levels of complexity, the presence of small but non-zero eigenvalues in the Hessian matrix results in high generalization error as the inverse of the Hessian is calculated for the pseudo-inverse solution.

\cite{neyshabur2014search} demonstrated that over-parameterized networks does not necessarily overfit thus suggesting the need of a new form of measure of model complexity other than network size. Subsequently,~\cite{neyshabur2018understanding} suggest a novel complexity measure based on unit-wise capacities which correlates better with the behavior of test error with increasing network size.~\cite{chizat2020implicit} study the global convergence and superior generalization behavior of infinitely wide two-layer neural networks with logistic loss.~\cite{goldt2020gaussian} make use of the Gaussian Equivalence Theorem to study the generalization performance of two-layer neural networks and kernel models trained on data drawn from pre-trained generative models.~\cite{bai2020linearization} investigated the gap between the empirical performance of over-parameterized networks and their NTK counterparts, first proposed by~\cite{jacot2018neural}.

From the perspective of bias/variance trade-off, \cite{geman1992neural}, and more recently,
\cite{neal2018modern} empirically observe that while bias is monotonically decreasing, variance could be decreasing too or unimodal as the number of parameters increases, thus manifesting a double descent generalization curve. \cite{hastie2019surprises} analytically study the variance. More recently, \cite{yang2020rethinking} provides a new bias/variance decomposition of bias exhibiting double descent in which the variance follows a bell-shaped curve. However, the decrease in variance as the model size increases remains unexplained. For high dimensional regression with random features, \cite{d2020double} provides an asymptotic expression for the bias/variance decomposition and identifies three sources of variance with non-monotonous behavior as the model size or dataset size varies. \cite{d2020triple} also employs the analysis of random feature models and identifies two forms of overfitting which leads to the so-called sample-wise triple descent. More recently, \cite{chen2020multiple} show that as a result of the interaction between the data and the model, one may design generalization curves with multiple descents.

From a statistical physics perspective, \cite{ opper1995statistical, bos1993generalization, bos1998statistical, opper1996statistical} are among the first studies which theoretically observe sample-wise double-descent in a ridge regression setup where the solution is obtained by the pseudo-inverse method.
Most of these studies employ the ``Gardner analysis'' \citep{gardner1988space, gardner1988optimal, gardner1989three} for models where the number of parameters and the dimensionality of data are coupled and hence the observed form of double descent is different from that observed in deep neural networks. A beautiful extended review of this line of work is provided in \cite{engel2001statistical}. Among recent works, \cite{gerace2020generalisation} also apply the Gardner analysis but to a novel generalized data generating process called the hidden manifold model and derive the model-wise double-descent equations analytically.

Finally, recall that towards providing an explanation for the epoch-wise double descent, we argue that \textit{the epoch-wise double descent can be attributed to different features being learned at different time-scales}, resulting in a non-monotonous generalization curve. In relation to the aspect of different feature learning scales,~\cite{rahaman2019spectral} had observed that DNNs have a tendency towards learning simple target functions first that can allow for good generalization behavior of various data samples.~\cite{DBLP:journals/corr/abs-2011-09468} also identify and provide explanation for a feature learning imbalance exhibited by over-parameterized networks trained via gradient descent on cross-entropy loss, with the networks learning only a subset of the full feature spectrum over training. More recently though,~\cite{DBLP:journals/corr/abs-2004-13954}, show that certain DNNs models prioritize learning high-frequency components first followed by the learning of slow but informative features, leading to the second descent of the test error as observed in epoch-wise double descent.

\paragraph{On the difference between model-wise and epoch-wise double descent curves.}
In accordance with its name, model-wise double descent (in the test error) occurs due to an increase in model-size (number of its parameters), i.e., as the model transitions from an under-parameterized to an over-parameterized regime. A variety of works have tried to understand this phenomenon from the lens of implicit regularization \citep{neyshabur2014search} or defining novel complexity measures \citep{neyshabur2017exploring}. On the other hand, epoch-wise double descent (in the test error) as treated in our work, is observed to occur for both over-parameterized \citep{nakkiran2019deep} and under-parameterized \citep{heckel2020early} setups. As found in our work along with the latter reference, this phenomenon seems to be a result of different feature learning speeds rather than the extent of model parameterization. The overlap of the test-error contributions from the different weights with varying scales of learning henceforth leads to a non-monotonous evolution of the model test error as exemplified by epoch-wise double descent.

We also note that the peak in model-wise double descent is associated with the model’s capacity to perfectly interpolate the data, we do not think an analogous notion exists for the case of epoch-wise double descent. Our understanding of the peak in the latter is that it corresponds to a training time configuration whereby a subclass of features are already learnt (due to a larger associated signal-to-noise-ratio) and are being overfitted upon to fit the target. As training proceeds further, the remaining set of features are eventually learnt thus allowing for a lowering of the test error.

\paragraph{On the implicit regularization of SGD and ridge-regularized loss.}

The results presented in Eqs. \ref{eq:gibbs_t}-\ref{eq:loss_approx} have a core dependence on the findings of \cite{ali2019continuous, ali2020implicit}. These works first formalize the connection between (continuous-time) GD or SGD-based training of an ordinary least squares (OLS) setup and that of ridge regression, providing bounds on the test error under these algorithms over training time $t$, in terms of a ridge setup with ridge parameter $\lambda = 1/t$. We utilize these results in the sense that by evaluating the generalization error $\mathcal{L}_G$ of our student-teacher setup with explicit ridge regularization, we invoke the connection between the ridge coefficient $\lambda$ and training time $t$ as described in these works, to obtain the behavior of (ridgeless) $\mathcal{L}_G$ over training. This determination of an expression of $\mathcal{L}_G (t)$ is what allows us to study the epoch-wise DD phenomenon.

\section{Technical Proofs}
\label{app:theory}

\subsection{The generalization error as a function of \texorpdfstring{$R$ and $Q$ (Eq. \ref{eq:L_G})}{The generalization error as a function of R and Q}}
\label{app:proof_L_G}
Recall that the teacher is the data generator and is defined as,
\begin{align}
    y:= y^* + \epsilon,  \qquad  y^* := \bm{z}^T \bm{W}, \qquad z_i\sim\mathcal{N}(0,\frac{1}{\sqrt{d}}),
\end{align}
where $\bm{z} \in \mathbb{R}^d$ is the teacher's input and $y^*, y \in \mathbb{R}$ are the teacher's noiseless and noisy outputs, respectively. $\bm{W} \in \mathbb{R}^d$ represents the (fixed) weights of the teacher and $\epsilon \in \mathbb{R}$ is the label noise.

While the student network is defined as,
\begin{align}
    \hat{y}:= \bm{x}^T \hat{\bm{W}}, \qquad s.t. \qquad \bm{x} := \textsc{F}^T \bm{z},
\end{align}
where the matrix $\textsc{F} \in \mathbb{R}^{d \times d}$ is a predefined and fixed modulation matrix regulating the student's access to the true input $\bm{z}$.

The average generalization error of the student, determined by averaging the student's error over all possible input configurations and label noise realizations is given by,
\begin{align}
\label{eq:L_G_app}
    \mathcal{L}_G := \frac{1}{2}\mathbb{E}_{\bm{z}, \epsilon} \big [ (y^* - \hat{y} + \epsilon)^2 \big ],
\end{align}
in which the variables $(y^*, \hat{y})$ form a bi-variate Gaussian distribution with zero mean and a covariance of,
\begin{align}
\label{eq:y_corr}
    \Sigma =
    \begin{bmatrix}
    <y^*, y^*>_{\bm{z}} & <y^*, \hat{y}>_{\bm{z}} \\
    <y^*, \hat{y}>_{\bm{z}} & <\hat{y}, \hat{y}>_{\bm{z}}
    \end{bmatrix} = 
    \begin{bmatrix}
    1 & R \\
    R & Q
    \end{bmatrix},
\end{align}
Here,
\begin{align}
R :&= \mathbb{E}_{\bm{z}}[{y^*} \hat{y}] = \mathbb{E}_{\bm{z}} [\bm{W}^T \bm{z} \bm{z}^T \textsc{F} \hat{\bm{W}}] = \frac{1}{d} \bm{W}^T \textsc{F} \hat{\bm{W}}, \quad \text{and,} \\
Q :&= \mathbb{E}_{\bm{z}}[\hat{y}^T \hat{y}] = \mathbb{E}_{\bm{z}} [\hat{\bm{W}}^T \textsc{F}^T \bm{z} \bm{z}^T \textsc{F} \hat{\bm{W}}] = \frac{1}{d} \hat{\bm{W}}^T \textsc{F}^T \textsc{F} \hat{\bm{W}}.
\end{align}

Utilizing this, Eq. \ref{eq:L_G_app} can be expressed as,

\begin{align}\label{eq:RQ_app1}
    \mathcal{L}_G :&= \frac{1}{2}\mathbb{E}_{\bm{z}} \left [ (y^* - \hat{y} + \epsilon)^2 \right ],\\
    &= \frac{1}{2}\mathbb{E}_{\tilde{y}^*, \tilde{\hat{y}}} \left [ (\tilde{y}^* - (R\tilde{y}^* + \sqrt{Q - R^2} \tilde{\hat{y}}) + \epsilon)^2 \right ],\\
    &=\frac{1}{2}(1 + \epsilon^2 + Q - 2R).
\end{align}

Additionally, we note that expectation w.r.t. a Gaussian variable $x$ is defined as,
\begin{align}
    \mathbb{E}_{x} [f(x)] := \int_{-\infty}^{+ \infty} \frac{d x}{ \sqrt{2 \pi}} \exp \left ( - \frac{x^2}{2} \right ) f(x).
\end{align}

\subsection{The general case exact dynamics (Eqs.~\ref{eq:R_}-\ref{eq:Q_})}
\label{app:proof_general}
Recall that to train our student network, we use gradient descent (GD) on the regularized mean-squared loss, evaluated on the $n$ training examples as,
\begin{align}
    \mathcal{L}_T := \frac{1}{2n} \sum_{\mu=1}^{n} \left(y^{\mu} - \hat{y}^{\mu}\right)^2 + \frac{\lambda}{2} ||\hat{\bm{W}}||^2_2,
\end{align}
where $\lambda \in [0, \infty)$ is the regularization coefficient.

The minimum of the loss function, denoted by $\hat{\bm{W}}_\text{gd}$, is achieved at,
\begin{align}
    \nabla_{\hat{\bm{W}}} \mathcal{L}_T = 0 &\Rightarrow \nabla_{\hat{\bm{W}}} \left [ \frac{1}{2} ||\bm{y} - \textsc{X} \hat{\bm{W}}||^2_2 + \frac{\lambda}{2} ||\hat{\bm{W}}||^2_2 \right ] = 0 \\
    &\Rightarrow -\textsc{X}^T (\bm{y} - \textsc{X}\hat{\bm{W}}_\text{gd}) + \lambda \hat{\bm{W}}_\text{gd} = 0\\
    \label{eq:w_bar}
    &\Rightarrow \hat{\bm{W}}_\text{gd} := (\textsc{X}^T \textsc{X} + \lambda \textsc{I})^{-1} \textsc{X}^T y.
\end{align}

Additionally, the exact dynamics under gradient-descent, correspond to,
\begin{equation}
    \begin{split}
                \hat{\bm{W}}_{t} &= \hat{\bm{W}}_{t-1} - \eta \nabla_{\hat{\bm{W}}_{t-1}} \mathcal{L}_T,\\
                &= \hat{\bm{W}}_{t-1} - \eta \big [ -\textsc{X}^T (\bm{y} - \textsc{X}\hat{\bm{W}}_{t-1}) + \lambda \hat{\bm{W}}_{t-1}  \big ]\\
                &= (1 - \eta \lambda) \hat{\bm{W}}_{t-1} - \eta \textsc{X}^T \textsc{X} \hat{\bm{W}}_{t-1} + \eta \textsc{X}^T\bm{y},\\
                &= [(1 - \eta \lambda) \textsc{I} - \eta \textsc{X}^T \textsc{X}] \hat{\bm{W}}_{t-1} + \eta \textsc{X}^T\bm{y},\\
                &= [(1 - \eta \lambda) \textsc{I} - \eta \textsc{X}^T \textsc{X}] \hat{\bm{W}}_{t-1} + \eta (\textsc{X}^T \textsc{X} + \lambda \textsc{I})(\textsc{X}^T \textsc{X} + \lambda \textsc{I})^{-1} \textsc{X}^T \bm{y},\\
                &= [(1 - \eta \lambda) \textsc{I} - \eta \textsc{X}^T \textsc{X}] \hat{\bm{W}}_{t-1} + \eta (\textsc{X}^T \textsc{X} + \lambda \textsc{I})\hat{\bm{W}}_\text{gd},\\
                &= [(1 - \eta \lambda) \textsc{I} - \eta \textsc{X}^T \textsc{X}] \hat{\bm{W}}_{t-1} + (\eta \textsc{X}^T \textsc{X} + \eta \lambda \textsc{I})\hat{\bm{W}}_\text{gd},\\
                &= [(1 - \eta \lambda) \textsc{I} - \eta \textsc{X}^T \textsc{X}] \hat{\bm{W}}_{t-1} + (\eta \textsc{X}^T \textsc{X} + (\eta \lambda - 1) \textsc{I})\hat{\bm{W}}_\text{gd} + \hat{\bm{W}}_\text{gd},
    \end{split}
\end{equation}
which leads to,
\begin{equation}
    \begin{split}
        \hat{\bm{W}}_{t}  - \hat{\bm{W}}_\text{gd} &= [(1 - \eta \lambda) \textsc{I} - \eta \textsc{X}^T \textsc{X}] (\hat{\bm{W}}_{t-1} - \hat{\bm{W}}_\text{gd}),\\
        &= [(1 - \eta \lambda) \textsc{I} - \eta \textsc{X}^T \textsc{X}]^t (\hat{\bm{W}}_{0} - \hat{\bm{W}}_\text{gd}).
    \end{split}
\end{equation}
Assuming $\hat{\bm{W}}_{0}=0$, we arrive at the following closed-form equation,
\begin{align}
    \label{eq:sol_exact_w}
    \hat{\bm{W}}_{t} = \left(\textsc{I} - \left [ (1 - \eta \lambda) \textsc{I} - \eta \textsc{X}^T\textsc{X} \right ]^t \right ) \hat{\bm{W}}_\text{gd},
\end{align}
where $\hat{\bm{W}}_\text{gd}$ is defined in Eq \ref{eq:w_bar}. 

Now back to definition of $R$ in Eq. \ref{eq:R_app} and by substitution of Eq. \ref{eq:sol_exact_w}, we have,
\begin{equation}
    \begin{split}
        R(t) :&=  \frac{1}{d} \bm{W}^T \textsc{F} \hat{\bm{W}}_t ,\\
        &=  \frac{1}{d} \bm{W}^T \textsc{F} \left (\textsc{I} - \left [ (1 - \eta \lambda) \textsc{I} - \eta \textsc{X}^T\textsc{X} \right ]^t \right ) \hat{\bm{W}}_\text{gd} ,\\
        &=  \frac{1}{d} \bm{W}^T \textsc{F} \left ( \textsc{I} - \left [ (1 - \eta \lambda) \textsc{I} - \eta \textsc{X}^T\textsc{X} \right] ^t \right ) (\textsc{X}^T \textsc{X} + \lambda \textsc{I})^{-1} \textsc{X}^T\bm{y} ,\\
        &=  \frac{1}{d} \bm{W}^T \textsc{F} \textsc{V} \left ( \textsc{I} - \left [ (1 - \eta \lambda) \textsc{I} - \eta \Lambda \right ] ^t \right ) (\Lambda + \lambda \textsc{I})^{-1} \textsc{V}^T \textsc{X}^T\bm{y} , \quad (\textsc{X}^T\textsc{X} = \textsc{V} \Lambda \textsc{V}^T)\\
        &=  \frac{1}{d} \bm{W}^T \textsc{F} \textsc{V} \left ( \textsc{I} - \left [ (1 - \eta \lambda) \textsc{I} - \eta \Lambda \right ] ^t \right ) (\Lambda + \lambda \textsc{I})^{-1} (\Lambda \textsc{V}^T \textsc{F}^{-1} \bm{W} + \Lambda^{\frac{1}{2}} \bm{\epsilon}) ,\\
        &= \boxed{\frac{1}{d} \textbf{Tr} \left [ \left ( \textsc{I} - [(1-\eta \lambda)\textsc{I} - \eta \Lambda]^t \right ) \frac{\Lambda}{\Lambda + \lambda \textsc{I}} \right]}.
    \end{split}
\end{equation}
Similarly for $Q$, let $D:= \left ( \textsc{I} - \left [ (1 - \eta \lambda) \textsc{I} - \eta \Lambda \right ] ^t \right )$, then we have,
\begin{equation}
\begin{split}
    Q(t) :&= \frac{1}{d} \hat{\bm{W}}^T \textsc{F}^T \textsc{F} \hat{\bm{W}} ,\\
    &= \frac{1}{d} \hat{\bm{W}}_\text{gd}^T  \Big ( \textsc{I} - \big [ (1 - \eta \lambda) \textsc{I} - \eta \textsc{X}^T\textsc{X} \big ] ^t \Big ) \textsc{F}^T \textsc{F} \Big ( \textsc{I} - \big [ (1 - \eta \lambda) \textsc{I} - \eta \textsc{X}^T\textsc{X} \big ] ^t \Big ) \hat{\bm{W}}_\text{gd}   ,\\
    &= \frac{1}{d} \hat{\bm{W}}_\text{gd}^T \textsc{V} \textsc{D} \textsc{V}^T  \textsc{F}^T \textsc{F} \textsc{V} \textsc{D} \textsc{V}^T \hat{\bm{W}}_\text{gd}   ,\\
    &= \frac{1}{d} \hat{\bm{W}}_\text{gd}^T \textsc{V} \textsc{D}  \tilde{\textsc{F}}^T \tilde{\textsc{F}}  \textsc{D} \textsc{V}^T \hat{\bm{W}}_\text{gd}, \qquad \qquad \qquad  (\tilde{\textsc{F}} := \textsc{FV}, \textsc{X} = \textsc{U}\Lambda^{1/2}\textsc{V}^T, \tilde{\bm{\epsilon}} := \textsc{U}^T \bm{\epsilon})\\
    &= \frac{1}{d} (\bm{W}^T \textsc{F}^{{-1}^T} \textsc{V} + \Lambda^{-1/2} \tilde{\epsilon}) \frac{\Lambda}{\Lambda + \lambda \textsc{I}} \textsc{D}  \tilde{\textsc{F}}^T \tilde{\textsc{F}}  \textsc{D}  \frac{\Lambda}{\Lambda + \lambda \textsc{I}} (\textsc{V}^T \textsc{F}^{-1} \bm{W} + \Lambda^{-1/2} \tilde{\bm{\epsilon}}) ,\\
    &= \frac{1}{d} (\bm{W}^T \tilde{\textsc{F}}^{{-1}^T} + \Lambda^{-1/2} \tilde{\bm{\epsilon}}) \frac{\Lambda}{\Lambda + \lambda \textsc{I}} \textsc{D}  \tilde{\textsc{F}}^T \tilde{\textsc{F}}  \textsc{D}  \frac{\Lambda}{\Lambda + \lambda \textsc{I}} ({\tilde{\textsc{F}}}^{-1} \bm{W} + \Lambda^{-1/2} \tilde{\bm{\epsilon}}) ,\\
    &= \frac{1}{d} \bm{W}^T \tilde{\textsc{F}}^{{-1}^T} \frac{\Lambda}{\Lambda + \lambda \textsc{I}} \textsc{D}  \tilde{\textsc{F}}^T \tilde{\textsc{F}} \textsc{D}  \frac{\Lambda}{\Lambda + \lambda \textsc{I}} {\tilde{\textsc{F}}}^{-1} \bm{W} ,\\
    &\quad + \frac{1}{d} \Lambda^{-1/2} \tilde{\bm{\epsilon}} \frac{\Lambda}{\Lambda + \lambda \textsc{I}} \textsc{D} \tilde{\textsc{F}}^T \tilde{\textsc{F}} \textsc{D}  \frac{\Lambda}{\Lambda + \lambda \textsc{I}} \Lambda^{-1/2} \tilde{\bm{\epsilon}},\\
    &= \boxed{\frac{1}{d} \textbf{Tr} \left [ \textsc{A}^T \textsc{A} \right ] + \frac{\sigma_{\epsilon}^2}{d} \textbf{Tr} \left [ \textsc{B}^T \textsc{B} \right ]}
\end{split}
\end{equation}
where,
\begin{align}
    \textsc{A}: &=  \tilde{\textsc{F}}\ \textsc{D} \frac{\Lambda}{\Lambda + \lambda \textsc{I}} \tilde{\textsc{F}}^{-1} \quad \text{and,}\quad \textsc{B}:=  \tilde{\textsc{F}} \ \textsc{D} \frac{\Lambda}{\Lambda + \lambda \textsc{I}} \Lambda^{-\frac{1}{2}}.
\end{align}
and we have additionally utilized the fact that label noise is drawn from a distribution of mean 0 and variance $\sigma_\epsilon^2$
On plugging these expressions into Eq.~\ref{eq:L_G} yields the exact dynamics of the generalization error as a function of 

\subsection{Special case of approximate dynamics (Eqs. \ref{eq:R} and \ref{eq:Q})}
\label{app:proof_special}

Recall that the teacher and student are defined as,
\begin{align}
    y := y^* + \epsilon, \qquad y^* := \bm{z}^T \bm{W}, \qquad \hat{y} := \bm{x}^T \hat{\bm{W}}, \qquad \bm{x}:= \textsc{F}^T \bm{z},
\end{align}
where $\epsilon \sim \mathcal{N}(0, \sigma_{\epsilon}^2)$ is the label noise, $\textsc{F}$ is the modulation matrix, and $||\bm{z}||_2^2 = ||\bm{W}||_2^2 = 1$.

The training and generalization losses are defined as,
\begin{align}
    \mathcal{L}_T := \frac{1}{2n} \sum (\hat{y} - y)^2 + \frac{\lambda}{2} ||\hat{\bm{W}}||_2^2, \qquad \mathcal{L}_G := \frac{1}{2} \mathbb{E}_{\bm{z}, \epsilon}[(\hat{y} - y)^2].
\end{align}

According to Eq.~\ref{eq:L_G}, the generalization loss can be written in terms of two scalar variables $R$ and $Q$,
\begin{align}
\mathcal{L}_G &= \frac{1}{2}(1 + \sigma^2_\epsilon + Q - 2R), \quad \text{where,} \\
R :&= \mathbb{E}_{\bm{z}}[{y^*} \hat{y}] = \mathbb{E}_{\bm{z}} [\bm{W}^T \bm{z} \bm{z}^T \textsc{F} \hat{\bm{W}}] = \frac{1}{d} \bm{W}^T \textsc{F} \hat{\bm{W}}, \quad \text{and,} \\
Q :&= \mathbb{E}_{\bm{z}}[\hat{y} \hat{y}] = \mathbb{E}_{\bm{z}} [\hat{\bm{W}}^T \textsc{F}^T \bm{z}\bm{z}^T \textsc{F} \hat{\bm{W}}] = \frac{1}{d} \hat{\bm{W}}^T \textsc{F}^T \textsc{F} \hat{\bm{W}}.
\end{align}
In the following, we next determine the most probable values of the above scalar entities, from statistical perspective. 

Application of $t$ steps of GD on $\mathcal{L}_T$ results in the following distribution for the student's weights:
\begin{align}
    \label{eq:sgd_equilibrium}
    P(\hat{\bm{W}}, t) = \frac{1}{Z_{\beta,t}} e^{-\beta \tilde{\mathcal{L}}_T(\hat{\bm{W}}, t)},
\end{align}
in which $\tilde{\mathcal{L}}_T(\hat{\bm{W}}, t)$ is a modified loss that dictates the distribution of student weights $\hat{\bm{W}}$ upon $t^{th}$ iterations of GD on the original loss $\mathcal{L}_T(\hat{\bm{W}})$, while $\beta$ corresponds to an (inverse) temperature parameter of our student weight distribution.

In Eq. \ref{eq:sgd_equilibrium}, 
$Z_{\beta,t}$ is the partition function which is defined as,
\begin{align}
    Z_{\beta,t} &=
    \frac{\int_{-\infty}^\infty \prod_{i=1}^{d} \mathrm{d} \big( \hat{\bm{W}}_i \big) \delta \Big( \frac{1}{d} \hat{\bm{W}}_i^T \textsc{F}^T \textsc{F} \hat{\bm{W}}_i -  Q_0 \Big ) e^{-\beta \tilde{\mathcal{L}}_T(\hat{\bm{W}}, t)} 
    }
    {\int_{-\infty}^\infty \prod_{i=1}^{d} \mathrm{d} \big( \hat{\bm{W}}_i \big) \delta \Big( \frac{1}{d} \hat{\bm{W}}_i^T \textsc{F}^T \textsc{F} \hat{\bm{W}}_i -  Q_0 \Big )},
\end{align}
in which, $Q_0$ can be perceived to be a target norm the student weights $\hat{\bm{W}}$ are being constrained to and $d$ is the dimensionality of the data. 

We are now interested in finding $R$ and $Q$ of the typical (most probable) students. Therefore, it suffices to find the students that dominate the partition function (or more precisely the free-energy). The free-energy is defined as,
\begin{align}
    \label{eq:free_energy_app}
    f := -\frac{1}{\beta d} \mathbb{E}_{\bm{W}, \bm{z}} \big[ \ln Z_{\beta,t} \big],
\end{align}
where $\bm{W}$ and $\bm{z}$ are the teacher's weight and input, respectively.

Due to the logarithm inside the expectation, analytical computation of Eq.~\ref{eq:free_energy_app} is intractable. However, the replica method~\citep{mezard1987spin} allows us to tackle this through the following identity,
\begin{align}
   \mathbb{E}_{\bm{W}, \bm{z}} [\ln Z_{\beta,t}] = \lim_{r \rightarrow 0} \frac{\mathbb{E}_{\bm{W}, \bm{z}} [Z_{\beta,t}^r] -1}{r}.
\end{align}

\paragraph{Case 1: $\textsc{F = I}$.}
As a first step, we first study a case where $\textsc{F} = \textsc{I}$. In that case, as derived in \cite{bos1998statistical}, Eq. \ref{eq:free_energy_app} can be simplified to,
\begin{align}
    \label{eq:free_energy2_app}
   - \beta f = \frac{1}{2} \frac{Q - R^2}{Q_0 - Q} + \frac{1}{2} \ln (Q_0 - Q) -\frac{n}{2d}\ln [1+\beta(Q_0 - Q)] -\frac{n \beta}{2d} \frac{G - 2HR + Q}{1+\beta(Q_0 - Q)},
\end{align}
in which the scalar variables $G$ and $H$ are defined as,
\begin{align}
    H :&= \mathbb{E}_{y^*, \epsilon}[{y^*} y] = \mathbb{E}_{y^*}[{y^*}(y^* + \epsilon)] = 1,\\
    G :&= \mathbb{E}_{y^*, \epsilon}[y y] = \mathbb{E}_{y^*}[(y^* + \epsilon)(y^* + \epsilon)] = 1 + \sigma_{\epsilon}^2.
\end{align}

At this point, in order to find the most probable students, one can extremize the free-energy $f(R, Q, Q_0)$ in Eq. \ref{eq:free_energy2_app}. The solution to this extremisation is derived in \cite{bos1993generalization} and reads,
\begin{align}
   \label{eq:R_app}
   \nabla_{R} f = 0 \qquad  &\Rightarrow \qquad  R = \frac{n}{d} \frac{1}{a}, \\
   \label{eq:Q_app}
   \nabla_{Q} f = 0 \qquad  &\Rightarrow \qquad  Q = \frac{n}{d} \frac{1}{a^2 - n/d} \left(G - \frac{n}{d}\frac{2-a}{a} \right), \\
   \label{eq:Q_0_app}
   \nabla_{Q_0} f = 0 \qquad  &\Rightarrow \qquad  a = 1 + \frac{2 \tilde{\lambda}}{1 - n/d - \tilde{\lambda} + \sqrt{(1 - n/d - \tilde{\lambda})^ 2 + 4 \tilde{\lambda}}},
\end{align}
in which,
\begin{align}
   \label{eq:time+lambda}
   a := 1 + \frac{1}{\beta (Q_0 - Q)}, \qquad \text{and}, \qquad \tilde{\lambda} := \lambda + \frac{1}{\eta t}.
\end{align}

\paragraph{Case 2: $\textsc{F}$ follows Assumption \ref{assump:two_blocks}.}

\begin{assumption*}
  The modulation matrix, $\textsc{F}$, under a SVD, $\textsc{F} := \textsc{U}\Sigma \textsc{V}^T$ has two sets of singular values such that the first $p$ singular values are equal to $\sigma_1$ and the remaining $d-p$ singular values are equal to $\sigma_2$. We let the condition number of $\textsc{F}$ to be denoted by $\kappa := \frac{\sigma_1}{\sigma_2} > 1$.
\end{assumption*}

Without loss of generality, we hereby assume that $\textsc{U}=\textsc{V}=\textsc{I}$. Consequently, the (noiseless) teacher and the student can be written as the composition of two sub-models as following,
\begin{align}
   y^* = y^*_1 + y^*_2 = \bm{z}_1^T \bm{W}_1 + \bm{z}_2^T \bm{W}_2, \qquad \text{(teacher decomposition)}\\
   \hat{y} = \hat{y}_1 + \hat{y}_2 = \sigma_1 \bm{z}_1^T \hat{\bm{W}}_1 + \sigma_2 \bm{z}_2^T \hat{\bm{W}}_2, \qquad \text{(student decomposition)}
\end{align}
in which $\bm{z}_1 \in \mathbb{R}^p$ and $\bm{z}_2 \in \mathbb{R}^{d-p}$.

Let $\hat{y}_i$ denote the output of the $i^{th}$ component of the student. Also let $y^*_i$ and $y_i$ denote the noiseless and noisy targets, respectively. Therefore, for the student components $i \in {1, 2}$, we have,

\noindent
\makebox[\linewidth]{%
  \begin{minipage}[t]{\dimexpr0.5\linewidth-.2pt}
    \vspace{-\baselineskip}
    \begin{align*}
      \hat{y}_1 &= \sigma_1z_1^T\hat{\bm{W}}_1,\\
      y^*_1 &= z_1^TW_1,\\
      y_1 &=y^*_1 + \underbrace{z_2^TW_2 - \sigma_2z_2^T\hat{\bm{W}}_2}_{y^*_2 - \hat{y}_2 = \epsilon_2(t)} + \epsilon,
    \end{align*}
  \end{minipage}%
  \vrule
  \begin{minipage}[t]{\dimexpr0.5\linewidth-.2pt}
    \vspace{-\baselineskip}
    \begin{align*}
      \hat{y}_2 &= \sigma_2z_2^T\hat{\bm{W}}_2,\\
      y^*_2 &= z_2^TW_2,\\
      y_2 &=y^*_2 + \underbrace{z_1^TW_1 - \sigma_1z_1^T\hat{\bm{W}}_1}_{y^*_1 - \hat{y}_1 = \epsilon_1(t)} + \epsilon,
    \end{align*}
  \end{minipage}
}

in which $\epsilon$ is the \textit{explicit noise}, added to the teacher's output while $\epsilon_j(t)$ is an \textit{implicit variable noise} which decreases as the component $j \neq i$ learns to match $\hat{y}_j$ and $y_j$.

Accordingly, the variables $H_i$ and $G_i$ for each component $i$ are re-defined as,
\noindent
\makebox[\linewidth]{%
  \begin{minipage}[t]{\dimexpr0.5\linewidth-.2pt}
    \vspace{-\baselineskip}
    \begin{align*}
      H_1 &= \mathbb{E}[{y^{*}_1} y_1] = \mathbb{E}_{y^{*}_1}[{y^{*}_1} y^*_1] = \frac{p}{d},\\
      G_1 &= \mathbb{E}[y_1 y_1],\\
      &= \mathbb{E}[(y^*_1 + y^*_2 - \hat{y}_2) (y^*_1 + y^*_2 - \hat{y}_2)] + \sigma^2_{\epsilon},\\
      &= \mathbb{E}[y^*_1 y^*_1] + \mathbb{E}[y^*_2 y^*_2] + \mathbb{E}[\hat{y}_2 \hat{y}_2],\\
      &\quad -2\mathbb{E}[y^*_2 \hat{y}_2] + \sigma^2_{\epsilon},\\
      &= \frac{p}{d} + \frac{d-p}{d} + Q_2 - 2R_2 + \sigma^2_{\epsilon},\\
      &= 1 + Q_2 - 2R_2 + \sigma^2_{\epsilon},\\
    \end{align*}
  \end{minipage}%
  \vrule
  \begin{minipage}[t]{\dimexpr0.5\linewidth-.2pt}
    \vspace{-\baselineskip}
    \begin{align*}
      H_2 &= \mathbb{E}[{y^{*}_2} y_2] = \mathbb{E}_{y^*_2}[{y^*_2} y^*_2] = \frac{d-p}{d},\\
      G_2 &= \mathbb{E}[y_2^T y_2],\\
      &= \mathbb{E}[(y^*_2 + y^*_1 - \hat{y}_1)^T (y^*_2 + y^*_1 - \hat{y}_1)] + \sigma^2_{\epsilon},\\
      &= \mathbb{E}[y^*_2 y^*_2] + \mathbb{E}[y^*_1 y^*_1] + \mathbb{E}[\hat{y}_1 \hat{y}_1],\\
      &\quad -2\mathbb{E}[y^*_1 \hat{y}_1] + \sigma^2_{\epsilon},\\
      &= \frac{d-p}{d} + \frac{p}{d} + Q_1 - 2R_1 + \sigma^2_{\epsilon},\\
      &= 1 + Q_1 - 2R_1 + \sigma^2_{\epsilon},\\
    \end{align*}
  \end{minipage}
}

in which $R_i$ and $Q_i$ are defined as,
\begin{align*}
  R_i := \mathbb{E}_{z}[y^*_i \hat{y}_i] = \frac{1}{d} W_i^T \sigma_i \hat{\bm{W}}_i, \quad \text{and,} \quad 
  Q_i := \mathbb{E}_{z}[\hat{y}_i \hat{y}_i] = \frac{1}{d} \hat{\bm{W}}_i^T \sigma_i^2 \hat{\bm{W}}_i,
\end{align*}
where $\sigma_i$ denotes the singular values of the matrix $\textsc{F}$ as defined in Assumption \ref{assump:two_blocks}.

Rewriting Eqs. \ref{eq:R_app}, \ref{eq:Q_app}, and \ref{eq:Q_0_app} for each of the student's components, we arrive at,

\noindent
\makebox[\linewidth]{%
  \begin{minipage}[t]{\dimexpr0.5\linewidth-.2pt}
    \vspace{-\baselineskip}
    \begin{align*}
        R_1 &= \frac{n}{d} \frac{1}{a_1}, \\
        Q_1 &= \frac{n}{p a_1^2 - n} \left(1 + Q_2 - 2R_2 + \sigma_{\epsilon}^2 - \frac{n}{d}\frac{2-a_1}{a_1} \right), \\
        a_1 &= 1 + \frac{2 \tilde{\lambda}_1} {1 - \frac{n}{p} - \tilde{\lambda}_1 + \sqrt{(1 - \frac{n}{p} - \tilde{\lambda}_1)^ 2 + 4 \tilde{\lambda}_1}}, \ \\
        \tilde{\lambda}_1 :&= \frac{d}{p} \frac{1}{\sigma_1^2} (\lambda + \frac{1}{\eta t}),
    \end{align*}
  \end{minipage}%
  \vrule
  \begin{minipage}[t]{\dimexpr0.5\linewidth-.2pt}
    \vspace{-\baselineskip}
    \begin{align*}
        R_2 &= \frac{n}{d} \frac{1}{a_2}, \\
        Q_2 &= \frac{n}{(d-p) a_1^2 - n} \left(1 + Q_1 - 2R_1 + \sigma_{\epsilon}^2 - \frac{n}{d}\frac{2-a_2}{a_2} \right), \\
        a_2 &= 1 + \frac{2 \tilde{\lambda}}{1 - \frac{n}{d-p} - \tilde{\lambda} + \sqrt{(1 - \frac{n}{d-p} - \tilde{\lambda})^ 2 + 4 \tilde{\lambda}}}, \\
        \tilde{\lambda}_2 :&= \frac{d}{d-p} \frac{1}{\sigma_2^2} (\lambda + \frac{1}{\eta t}),
    \end{align*}
  \end{minipage}
}

where $Q_1$ depends on $Q_2$ and vice versa. However, with simple calculations, we can arrive at the following standalone equation. Let,
\begin{gather}
    \alpha_1 = \frac{n}{p},\ \alpha_2 = \frac{n}{d-p},
\end{gather}
and also let,
\begin{gather}
    b_i = \frac{\alpha_i}{a_i ^ 2 - \alpha_i}, \quad c_i = 1 - 2 R_i - \frac{n}{d}\frac{2 - a_i}{a_i} \qquad \text{for} \qquad i \in \{1, 2\},
\end{gather}
with which the closed-from scalar expression for $Q(t, \lambda)$ reads,
\begin{gather}
    Q(t, \lambda) = Q_1 + Q_2, \quad \text{where}, \quad Q_1 := \frac{b_1 b_2 c_2 + b_1 c_1}{1 - b_1 b_2}, \quad \text{and}, \quad Q_2 := \frac{b_1 b_2 c_1 + b_2 c_2}{1 - b_1 b_2}.
\end{gather}

\vspace{15cm}

\subsection{Approximation of $\tilde{\mathcal{L}}(\hat{\bm{W}}, t)$ in Eq. \ref{eq:loss_approx}.}
\begin{lemma}
\label{lemma:loss_approx}
Let $\mathcal{L}_T(\hat{\bm{W}})$ be the training loss function on which we apply gradient descent. The $t^{th}$ iterate of gradient descent matches the minimum of $\tilde{\mathcal{L}}_T(\hat{\bm{W}}, t)$ defined as,
\begin{align}
 \tilde{\mathcal{L}}_T(\hat{\bm{W}}, t) := \frac{1}{2n} \sum \Big [ \hat{y}^{\mu} - y^{\mu} \Big ] ^2 + \frac{1}{\eta t} ||\hat{\bm{W}}||_2^2.
\end{align}
\end{lemma}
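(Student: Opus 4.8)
The plan is to produce both the $t$-th gradient-descent iterate and the unique minimizer of $\tilde{\mathcal{L}}_T(\cdot,t)$ in closed form, and to compare them in the eigenbasis of $\textsc{X}^T\textsc{X}$. For the iterate I would reuse the exact ridgeless solution already established in Eq.~\ref{eq:sol_sgd},
\begin{align}
    \hat{\bm{W}}_t = \Big(\textsc{I} - [\textsc{I} - \eta\,\textsc{X}^T\textsc{X}]^t\Big)(\textsc{X}^T\textsc{X})^{-1}\textsc{X}^T\bm{y}.
\end{align}
Because $\tilde{\mathcal{L}}_T(\hat{\bm{W}},t)$ is a strongly convex quadratic, its minimizer is obtained by setting the gradient to zero, giving the ridge-type form $(\textsc{X}^T\textsc{X} + \gamma\,\textsc{I})^{-1}\textsc{X}^T\bm{y}$ with an effective ridge level $\gamma \propto \tfrac{1}{\eta t}$ read off from the penalty coefficient of $\tilde{\mathcal{L}}_T$; reinstating the explicit $\lambda$ reproduces the additive structure $\tilde\lambda = \lambda + \tfrac{1}{\eta t}$ of Eq.~\ref{eq:time+lambda}.

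I would then diagonalize $\textsc{X}^T\textsc{X} = \textsc{V}\Lambda\textsc{V}^T$ and project both estimators onto the eigenbasis, where each becomes the least-squares coefficient $(\Lambda^{-1}\textsc{V}^T\textsc{X}^T\bm{y})_i$ scaled by a scalar shrinkage factor. The iterate carries $g_{\mathrm{gd}}(\lambda_i) = 1 - (1-\eta\lambda_i)^t$ and the ridge minimizer carries $g_{\mathrm{ridge}}(\lambda_i) = \tfrac{\lambda_i}{\lambda_i + \gamma}$, so the lemma reduces to a purely scalar comparison of two shrinkage profiles. This is cleanest in the continuous-time limit $\eta\to0$ with $s := \eta t$ fixed, where $(1-\eta\lambda_i)^t \to e^{-s\lambda_i}$ and the comparison is between $1 - e^{-s\lambda_i}$ and $\tfrac{s\lambda_i}{1+s\lambda_i}$.

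To close the argument I would show these profiles agree: both vanish at $\lambda_i=0$, increase monotonically to $1$ as $\lambda_i\to\infty$, and share the expansion $s\lambda_i + O((s\lambda_i)^2)$ near the origin, which is precisely what makes stopping at time $t$ behave like a ridge penalty of size $\tfrac{1}{\eta t}$. To turn this matching of endpoints and slopes into a quantitative statement I would invoke \cite{ali2019continuous, ali2020implicit}, who prove that the gradient-flow estimator and the ridge estimator with parameter $1/s$ have shrinkage factors that sandwich one another within universal constants and whose risks agree up to a constant factor; this is exactly the content justifying the $\approx$ in Eq.~\ref{eq:loss_approx}.

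The main obstacle is that the two estimators are not literally equal: $g_{\mathrm{gd}}$ and $g_{\mathrm{ridge}}$ are genuinely different functions of $\lambda_i$, coinciding only at the endpoints and to first order, so the entire substance of the lemma is controlling this discrepancy uniformly over the spectrum $\Lambda$ rather than writing down the (routine) closed forms. Consequently the proof must lean on the Ali et al.\ equivalence, and any statement of exact equality should be read as this approximate, spectrum-wise correspondence.
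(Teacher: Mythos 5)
Your proposal follows essentially the same route as the paper's proof: both start from the closed-form iterate of Eq.~\ref{eq:sol_sgd}, diagonalize $\textsc{X}^T\textsc{X} = \textsc{V}\Lambda\textsc{V}^T$, and identify the GD shrinkage factor $1-(1-\eta\lambda_i)^t$ with the ridge factor $\lambda_i/(\lambda_i + \tfrac{1}{\eta t})$ via first-order approximations --- the paper does this by chaining two $\log(1+x)\approx x$ steps inside the $\argmin$, which is exactly your endpoint-and-slope matching of $1-e^{-s\lambda_i}$ against $\tfrac{s\lambda_i}{1+s\lambda_i}$, and the paper likewise defers quantitative justification to \cite{ali2019continuous, ali2020implicit}. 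Your closing caveat that the equality is only an approximate, spectrum-wise correspondence is also consistent with the paper, whose proof carries explicit $\approx$ signs and frames the result as the early-stopping/ridge equivalence.
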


\begin{proof}
 The goal is to show,
 \begin{align}
 \hat{\bm{W}}_t = \argmin_{\hat{\bm{W}}} \tilde{\mathcal{L}}_T(\hat{\bm{W}}, t), \quad \quad \text{where, } \quad \quad \hat{\bm{W}}_t := \hat{\bm{W}}_{t-1} - \eta \nabla_{\hat{\bm{W}}_{t-1}} \mathcal{L}(\hat{\bm{W}}_{t-1}).
\end{align}
 
For brevity of derivations, here we only consider the case where $\lambda = \sigma_{\epsilon}^2 = 0$. Recall the closed-form derivation of $\hat{\bm{W}}_t$ in Eq. \ref{eq:sol_sgd},
 \begin{align}
 \hat{\bm{W}}_t &= \Big ( I - \big [ I - \eta X^TX \big ] ^t \Big ) (X^TX)^{-1} X^Ty, \\
           &= \argmin_{\hat{\bm{W}}} \Big [ X\hat{\bm{W}} - X \Big ( I - \big [ I - \eta X^TX \big ] ^t \Big ) (X^TX)^{-1} X^Ty \Big ]^2, \\
           &= \argmin_{\hat{\bm{W}}} \frac{1}{2n} \sum \Big [ \hat{y}^{\mu} - x^{{\mu}^T} \Big ( I - \big [ I - \eta X^TX \big ] ^t \Big ) \underbrace{(X^TX)^{-1} X^Ty}_{=W,  \text{ assuming} \ \sigma^2_{\epsilon} = 0} \Big ] ^2, \\
           &= \argmin_{\hat{\bm{W}}} \frac{1}{2n} \sum \Big [ \hat{y}^{\mu} - \underbrace{x^{{\mu}^T} \Big ( I - \big [ I - \eta X^TX \big ] ^t \Big ) W }_{\text{a dynamic target (function of $t$)}} \Big ] ^2,\\
           &= \argmin_{\hat{\bm{W}}} \frac{1}{2n} \sum \Big [ \hat{y}^{\mu} - x^{{\mu}^T} V \Big ( I - \big [ I - \eta \Lambda \big ] ^t \Big ) V^T W \Big ] ^2, \quad \quad \qquad \ \text{($X^TX=V\Lambda V^T$)}\\
            &= \argmin_{\hat{\bm{W}}} \frac{1}{2n} \sum \Big [ \hat{y}^{\mu} - x^{{\mu}^T} V \Big ( I - \exp \big (t \log[ I - \eta \Lambda ] \big ) \Big ) V^T W \Big ] ^2,\\
            &\approx \argmin_{\hat{\bm{W}}} \frac{1}{2n} \sum \Big [ \hat{y}^{\mu} - x^{{\mu}^T} V \Big ( I - \exp \big ( - \eta \Lambda t \big ) \Big ) V^T W \Big ] ^2, \qquad \quad \text{($\log(1+x) \approx x$)}\\
            &= \argmin_{\hat{\bm{W}}} \frac{1}{2n} \sum \Big [ \hat{y}^{\mu} - x^{{\mu}^T} V \Big ( I - \exp \big ( - \frac{\Lambda}{ 1 / \eta t} \big ) \Big ) V^T W \Big ] ^2,\\
            &\approx \argmin_{\hat{\bm{W}}} \frac{1}{2n} \sum \Big [ \hat{y}^{\mu} - x^{{\mu}^T} V \Big ( I - \exp \big ( - \log(\frac{\Lambda}{ 1 / \eta t} + I) \big ) \Big ) V^T W \Big ] ^2, \quad \text{($\log(1+x) \approx x$)}\\
            &= \argmin_{\hat{\bm{W}}} \frac{1}{2n} \sum \Big [ \hat{y}^{\mu} - x^{{\mu}^T} V \Big ( I - \big [ \Lambda + \frac{1}{\eta t} I \big ]^{-1} \frac{1}{\eta t} \Big ) V^T W \Big ] ^2,\\
            &= \argmin_{\hat{\bm{W}}} \frac{1}{2n} \sum \Big [ \hat{y}^{\mu} - x^{{\mu}^T} V \Big ( (\Lambda + \frac{1}{\eta t} I )^{-1} \Lambda \Big ) V^T W \Big ] ^2,\\
            &= \argmin_{\hat{\bm{W}}} \frac{1}{2n} \sum \Big [ \hat{y}^{\mu} - x^{{\mu}^T} (X^TX + \frac{1}{\eta t} I)^{-1} X^TX W \Big ] ^2,\\
            &= \argmin_{\hat{\bm{W}}} \frac{1}{2n} \sum \Big [ \hat{y}^{\mu} - x^{{\mu}^T} \underbrace{(X^TX + \frac{1}{\eta t} I)^{-1} X^T y}_{\text{the normal equation}} \Big ] ^2,\\
            &= \argmin_{\hat{\bm{W}}} \underbrace{\frac{1}{2n} \sum \Big [ \hat{y}^{\mu} - y^{\mu} \Big ] ^2 + \frac{1}{\eta t} ||\hat{\bm{W}}||_2^2}_{\tilde{\mathcal{L}}_T(\hat{\bm{W}}, t)},
\end{align}
which concludes the proof. Consistent with \citep{ali2019continuous}, this approximation implies that "L2 regularization and early stopping can be seen as equivalent (at least under the quadratic approximation of the objective function)." \citep{goodfellow2016deep}.
 
\end{proof}

\subsection{Match between theory and experiments}
Figure \ref{fig:match_theory} provides the comparison between empirical results and the two analytical results in Eqs. \ref{eq:R_}, \ref{eq:Q_} and Eqs. \ref{eq:R}, \ref{eq:Q}.
We observe that the theory and simulations accurately match. Further experiments are provided in the following  \href{https://colab.research.google.com/drive/10UHRBnIa2V8uwBWXd5W_-ZhKKSh2OPy7?usp=sharing}{\texttt{Colab notebook}}.

\begin{figure}[t]
\center{\includegraphics[width=1.0\textwidth]
        {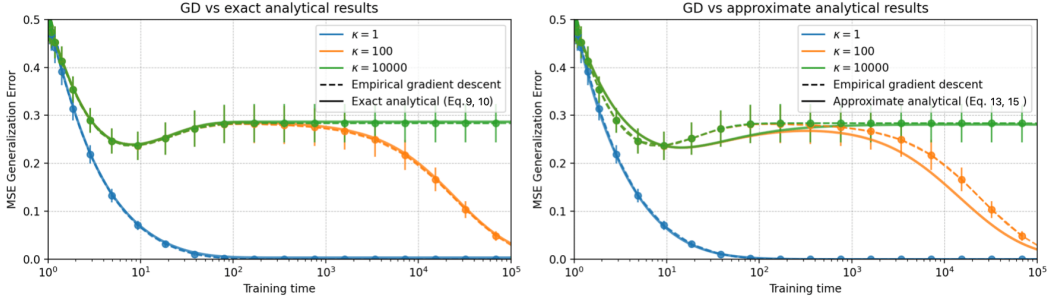}}
\caption{Match between theory and experiments. We compare the analytical solutions to simulations performed on our teacher-student setup with $d = 100$, $p = 70$, $n = 150$ and we plot the error bars over 100 random seeds for three different condition numbers. The analytical results and the simulations match closely and the double descent is observed for $\kappa=100$.}
\label{fig:match_theory}
\end{figure}

\subsection{Replica Trick}
In the following, we detail the mathematical arguments leading to the \textit{replica trick} expression. For some $r\to0$, we can write for any scalar $x$:
\begin{equation}\label{eq:replica_trick}
    \begin{split}
        x^r &= \exp(r\ln x) = \lim_{r\to0} 1 + r\ln x \\
        \Rightarrow \lim_{r\to0} r\ln x &= \lim_{r\to0} x^r - 1 \\
        \Rightarrow \ln x &= \lim_{r\to0} \frac{x^r - 1}{r} \\
        \therefore \mathbb{E}[\ln x] &= \lim_{r\to0} \frac{\mathbb{E}[x^r] - 1}{r},\ \ \mathbb{E}: \text{averaging} \\
    \end{split}
\end{equation}

\subsection{Computation of the free-energy}
The self-averaged free energy (per unit weight) of our student network, is given by~\citep{engel2001statistical},
\begin{equation}\label{eq:free_part}
    \begin{split}
        -\beta f &= \frac{1}{d} \langle\langle \ln Z \rangle\rangle_{z,W}
    \end{split}
\end{equation}
Here, $\beta = 1/T$ is the inverse temperature parameter corresponding to our statistical ensemble, $d$ the (teacher) student network width, and $Z$ the partition function of the system defined as ($n$: number of training examples).

Leveraging the replica trick, we next obtain,
\begin{equation}
    \begin{split}
        \langle\langle Z^r \rangle\rangle_{z,W} &= \prod_{a=1}^r \prod_{\mu=1}^d \int \mathrm{d}\mu\left(W^a\right) \mathrm{d} y^\mu_a \mathrm{d} (y^*)^\mu e^{-\beta N \mathcal{E}_T(y_a, y^*)} \\
        &\quad\times \left\langle\left\langle \delta\left({y^*}^{\mu} - \frac{1}{\sqrt{d}} W^{T} x^{*\mu}\right) \delta\left({y}_a^{\mu} - \frac{1}{\sqrt{d}} W_a^T x^\mu\right) \right\rangle\right\rangle_{z,W} \\
        &= \prod_{a=1}^r \prod_{\mu=1}^d \int \mathrm{d}\mu\left(W^a\right) \frac{\mathrm{d} y^\mu_a \mathrm{d} \hat{y}^\mu_a}{2\pi} \frac{\mathrm{d} y^{*\mu} \mathrm{d} \hat{y}^{*\mu}}{2\pi} e^{-\beta N \mathcal{E}_T(y_a, y^*)} e^{i{y^*}^{\mu}\hat{y}^{*\mu} + iy_a^{\mu}\hat{y}_a^{\mu}}\\
        &\quad\times \left\langle\left\langle \exp{\left(-\frac{i}{\sqrt{d}} \hat{y}^{*\mu} W^{T} x^{*\mu} - \frac{i}{\sqrt{d}} \hat{y}_a^{\mu} W_a^T x^\mu \right)} \right\rangle\right\rangle_{z,W} \\
    \end{split}
\end{equation}
where in the last line above, we have expressed the inserted $\delta$ functions using their integral representations. To make further progress, we introduce the auxiliary variables,
\begin{gather}
    \sum_{ija} W_a^i \Delta_{ij} W^{*j} = d R_a, \\ 
    \sum_{ij\langle a,b\rangle} W_a^i \Gamma_{ij} W_b^j = d Q_{ab}
\end{gather}
via the respective $\delta$ functions, to arrive at,
\begin{equation}
    \begin{split}
        \langle\langle Z^n \rangle\rangle_{z,W} &= \prod_{\mu,a,b} \int \mathrm{d}\mu\left(\textbf{W}^a\right) \frac{\mathrm{d} y^\mu_a \mathrm{d} \hat{y}^\mu_a}{2\pi} \frac{\mathrm{d} y^{*\mu} \mathrm{d} \hat{y}^{*\mu}}{2\pi} e^{-\beta N \mathcal{E}_T(y_a, y^*)} e^{i{y^*}^{\mu}\hat{y}^{*\mu} + iy_a^{\mu}\hat{y}_a^{\mu}}\\
        &\quad\times \int P\mathrm{d}Q^{ab} \int P\mathrm{d} R^a\ \delta\left(\sum_{i,j,a} W_a^i \Delta_{i,j} W^{*j} - PR_a\right) \delta\left(\sum_{ij\langle a,b\rangle} W_a^i \Gamma_{ij} W_b^j - P Q_{ab}\right) \\
        &\quad\times \Bigg\langle\Bigg\langle \exp\Big(-\frac{Q_0}{2}\sum_{\mu,a}(\hat{y}_a^{\mu})^2 - \frac{1}{2}\sum_{\mu,\langle a,b\rangle}\hat{y}_a^{\mu}\hat{y}_b^{\mu} Q_{ab} - \sum_{\mu,a}\hat{y}^{*\mu}\hat{y}_a^{\mu} R_a - \frac{1}{2}\sum_{\mu}(\hat{y}^{*\mu})^2  \Big) \Bigg\rangle\Bigg\rangle_{W}
    \end{split}
\end{equation}
Repeating the procedure of expressing the above $\delta$ functions using their integral representations, we then get ($\alpha = n/d$),
\begin{equation}
    \begin{split}
        \langle\langle Z^n \rangle\rangle_{x,x^*,W} &= \int\prod_{a,b} \frac{\mathrm{d}Q_{0}}{\sqrt{2\pi}}\frac{\mathrm{d}\hat{Q}_{0a}}{4\pi} \frac{\mathrm{d}Q_{ab}\hat{Q}_{ab}}{2\pi/d} \frac{\mathrm{d}R_{a}\hat{R}_{a}}{2\pi/d} \exp\Big(\frac{iP}{2}\sum_a {Q}_{0}\hat{Q}_{0a} + iP\sum_{a<b}Q^{ab}\hat{Q}^{ab} \\
        &\qquad + iP\sum_{a}R^{a}\hat{R}^{a} \Big) \int \prod_{i,a} \frac{\mathrm{d}W_i^a}{\sqrt{2\pi}} \exp\Big(-\frac{i}{2}\sum_{i,j,a}\hat{Q}_{0a}W_a^i\Gamma_{ij}W_a^j \\
        &\qquad - i\sum_{i,j,a<b}\hat{Q}_{ab} W_a^i\Gamma_{ij}W_b^j - i\sum_{i,j,a}\hat{R}_a \Delta_{ij}W_a^j\Big) \times \\
        &\qquad \int\prod_{\mu,a} \frac{\mathrm{d} y^\mu_a \mathrm{d} \hat{y}^\mu_a}{2\pi} \frac{\mathrm{d}y^{*\mu}}{\sqrt{2\pi}} e^{-\beta N \mathcal{E}_T(y_a, y^*)} \exp\Big(-\frac{1}{2}\sum_\mu (y^{*\mu})^2 + i\sum_{\mu, a} \hat{y}_a^\mu \hat{y}_a^\mu \\
        &\qquad - \frac{1}{2}\sum_{a,\mu}\left(1 - R_a^2\right) (\hat{y}_a^{\mu})^2 - \frac{1}{2}\sum_{\mu,\langle a,b\rangle} \hat{y}_a^\mu \hat{y}_b^\mu \left(Q^{ab} - R^a R^b\right) - i\sum_{\mu,a} y^{*\mu}\hat{y}_a^\mu R^a\Big)\\
    \end{split}
\end{equation}
If we now, perform a singular value decomposition of the covariance matrix $\Gamma$ as, $\Gamma = \text{U}^T \text{S} \text{U} = \text{V}^T \text{V}$, where $\text{S}$: matrix of singular values of $\Gamma$, and we have expressed, $\text{V} = \text{S}^{1/2} \text{U}$, then one can proceed to write,
\begin{equation}
    \begin{split}
        \langle\langle Z^n \rangle\rangle_{x,W} &= \frac{1}{\det |V|} \int\prod_{a,b}\frac{\mathrm{d}Q_0}{\sqrt{2\pi}} \frac{\mathrm{d}\hat{Q}_{0a}}{4\pi} \frac{\mathrm{d}Q_{ab}\hat{Q}_{ab}}{2\pi/d} \frac{\mathrm{d}R_{a}\hat{R}_{a}}{2\pi/d} \exp\Big(\frac{iP}{2}\sum_a{Q}_{0}\hat{Q}_{0a} \\
        &\qquad  + iP\sum_{a<b}Q^{ab}\hat{Q}^{ab} + iP\sum_{a}R^{a}\hat{R}^{a} \Big) \int \prod_{i,a} \frac{\mathrm{d}\Tilde{W}_i^a}{\sqrt{2\pi}} \exp\Big(-\frac{i}{2}\sum_{i,a}\hat{Q}_{0a}\left(\Tilde{W}_a^i\right)^2 \\
        &\qquad - i\sum_{i,a<b}\hat{Q}_{ab} \Tilde{W}_a^i \Tilde{W}_b^i - i\sum_{i,j,a}\hat{R}_a \Tilde{W}_a^j\Big) \times \int\prod_{\mu,a} \frac{\mathrm{d} y^\mu_a \mathrm{d} \hat{y}^\mu_a}{2\pi} \frac{\mathrm{d}y^{*\mu}}{\sqrt{2\pi}} e^{-\beta N \mathcal{E}_T(y_a, y^*)} \\
        &\qquad \exp\Big(-\frac{1}{2}\sum_\mu (y^*_\mu)^2 + i\sum_{\mu, a} \hat{y}_a^\mu \hat{y}_a^\mu - \frac{1}{2}\sum_{a,\mu}\left(1 - R_a^2\right) (\hat{y}_a^{\mu})^2 - i\sum_{\mu,a} y^{*\mu}\hat{y}_a^\mu R^a \\
        &\qquad - \frac{1}{2}\sum_{\mu,\langle a,b\rangle} \hat{y}_a^\mu \hat{y}_b^\mu \left(Q^{ab} - R^a R^b\right) \Big) \\
    \end{split}
\end{equation}
having expressed, $\Tilde{W}_a = \text{V}W_a$, and identifying $\Delta = \text{S}^{1/2} \text{U}$ from our definitions. Now, since in the above, the $W_i^a$ integrals factorize in $i$, and similarly the $y^\mu_a$, $\hat{y}^\mu_a$ and $\mathrm{d}y^{*\mu}$ factorize in $\mu$, one can proceed to write:
\begin{equation}\label{eq:part_func_gs_ge}
    \begin{split}
        \langle\langle Z^n \rangle\rangle_{x,W} &= \frac{1}{\det |V|} \int\prod_{a,b} \frac{\mathrm{d}Q_{0}\mathrm{d}\hat{Q}_{0a}}{\sqrt{2\pi}4\pi} \frac{\mathrm{d}Q_{ab}\hat{Q}_{ab}}{2\pi/d} \frac{\mathrm{d}R_{a}\hat{R}_{a}}{2\pi/d} \exp\Big(P\Big[\frac{i}{2}\sum_a{Q}_{0}\hat{Q}_{0a} \\
        &\qquad + i\sum_{a<b}Q^{ab}\hat{Q}^{ab} + i\sum_{a}R^{a}\hat{R}^{a} + G_S(\hat{Q}_{0a},\hat{Q}^{ab},\hat{R}^{a}) + \alpha G_E(Q^{ab},R^{a})\Big]\Big)
    \end{split}
\end{equation}
where,
\begin{equation}
    \begin{split}
        G_S(\hat{Q}_{0a},\hat{Q}^{ab},\hat{R}^{a}) &= \ln \int \prod_{a} \frac{\mathrm{d}\Tilde{W}^a}{\sqrt{2\pi}} \exp\Big(-\frac{i}{2}\sum_{a}\hat{Q}_{0a}\Tilde{W}_a^i \Tilde{W}_a^i - i\sum_{a<b}\hat{Q}_{ab} \Tilde{W}_a \Tilde{W}_b - i\sum_{a}\hat{R}_a \Tilde{W}_a\Big) \\
        G_E(Q^{ab},R^{a}) &= \ln\int\prod_{a} \frac{\mathrm{d} y_a \mathrm{d} \hat{y}_a}{2\pi} \frac{\mathrm{d}y^{*}}{\sqrt{2\pi}} e^{-\beta N \mathcal{E}_T(y_a, y^*)} \exp\Big(-\frac{1}{2} (y^*)^2 + i\sum_{a} \hat{y}_a \hat{y}_a \\
        &\qquad - \frac{1}{2}\sum_{a}\left(1 - R_a^2\right) (\hat{y}_a)^2 - \frac{1}{2}\sum_{\langle a,b\rangle} \hat{y}_a \hat{y}_b \left(Q^{ab} - R^a R^b\right) - iy^{*\mu}\sum_{a} \hat{y}_a R^a\Big)\\
    \end{split}
\end{equation}
Now, in the limit $d\to\infty$, Eq.~\ref{eq:part_func_gs_ge} can be approximated using the saddle-point approach~\citep{bender2013advanced},
\begin{equation}
    \begin{split}
        \langle\langle Z^n \rangle\rangle_{x,W} &\approx \textbf{extr}_{Q_{0},\hat{Q}_{0a}, Q^{ab}, \hat{Q}^{ab}, R^{a}, \hat{R}^{a}}\exp\Big(P\Big[\frac{i}{2}\sum_a {Q}_{0} \hat{Q}_{0a} + i\sum_{a<b}Q^{ab}\hat{Q}^{ab} \\
        &\qquad\qquad + i\sum_{a}R^{a}\hat{R}^{a} + G_S(\hat{Q}_{0a},\hat{Q}^{ab},\hat{R}^{a}) + \alpha G_E(Q^{ab},R^{a})\Big]\Big)
    \end{split}
\end{equation}
where, $\textbf{extr}$ corresponds to extremization of $\langle\langle Z^n \rangle\rangle_{x,W}$ over the respective order parameters. Performing this extremization over $\hat{Q}_{0a}, \hat{Q}^{ab}$ and $\hat{R}^{a}$, then generates an expression of the form,
\begin{equation}
    \begin{split}
        \langle\langle Z^n \rangle\rangle_{x,W} &= \textbf{extr}_{Q_0,Q,R}\exp\Bigg\{ nN \Bigg(\frac{1}{2}\frac{Q - R^2}{Q_0 - Q} + \frac{1}{2}\ln(Q_0 - Q) - \frac{\alpha}{2}\ln\left[1 + \beta(Q_0 - Q)\right] \\
        &\qquad\qquad - \frac{\alpha\beta}{2}\frac{1 - 2R + Q}{1 + \beta(Q_0 - Q)}\Bigg)\Bigg \}
    \end{split}
\end{equation}
where we have invoked \textit{replica symmetry} in the form, $Q^{ab} = Q$ and $R^a = R$, and that $\mathcal{E}_T = (y^* - y)^2/2$. Plugging this back into Eq.~\ref{eq:free_part}, then finally yields,
\begin{equation}\label{eq:free_eg_app}
    \begin{split}
        \beta f &= - \textbf{extr}_{Q_0,Q,R}\Bigg\{ \frac{1}{2}\frac{Q - R^2}{Q_0 - Q} + \frac{1}{2}\ln(Q_0 - Q) - \frac{\alpha}{2}\ln\left[1 + \beta(Q_0 - Q)\right] \\
        &\qquad\qquad - \frac{\alpha\beta}{2}\frac{1 - 2R + Q}{1 + \beta(Q_0 - Q)}\Bigg\}
    \end{split}
\end{equation}
The remaining pair of order parameters generate the following set of transcendental equations on extremization~\citep{bos1998statistical}:
\begin{equation}\label{eq:r_q_q_0}
    \begin{split}
        R &= \frac{\alpha}{a} \\
        Q &= \frac{\alpha}{a^2 - \alpha}\left(1 - \frac{2-a}{a}\alpha \right)\\
        Q_{0} &= Q + \frac{1}{\beta\left(a-1\right)}
    \end{split}
\end{equation}
where, $a = \max[1,\alpha]$ for $T\to0$.

Now, the above determined values of $R,Q$ and $Q_{0}$ can be perceived as the \textit{maximally likely} values of $R,Q$ and $Q_0$ of our teacher-student setup, for an inverse temperature $\beta$ parameterizing the system.

\end{document}